\documentclass[sigconf,nonacm]{acmart}

\setcopyright{none}
\renewcommand\shortauthors{}

\AtBeginDocument{%
  }

\usepackage{bm}
\usepackage{algorithm}
\usepackage{algorithmic}
\usepackage{tcolorbox}

\newcommand{\x}{\mathbf{x}}

\begin{document}

\title[Hierarchical MoE with Two-Stage Optimization]%
{Hierarchical Mixture-of-Experts with Two-Stage Optimization}

\author{Gleb Molodtsov}
\authornote{These authors contributed equally to this work.}
\affiliation{%
  \institution{BRAIn Lab}
  \city{Moscow}
  \country{Russia}
}

\author{Alexander Miasnikov}
\authornotemark[1]
\affiliation{%
  \institution{BRAIn Lab}
  \city{Moscow}
  \country{Russia}
}

\author{Aleksandr Beznosikov}
\affiliation{%
  \institution{BRAIn Lab}
  \city{Moscow}
  \country{Russia}
}

\settopmatter{authorsperrow=3}
\renewcommand{\shortauthors}{Molodtsov et al. 2026}

\begin{abstract}
Sparse Mixture-of-Experts (MoE) models scale capacity by routing each token to a small subset of experts. However, their routers exhibit a fundamental trade-off: strong load balancing can suppress expert specialization, while aggressive diversity often causes routing collapse.
We propose $\textbf{Hi-MoE}$, a grouped MoE framework that decomposes routing control into two coupled levels: (i) $\textit{inter-group}$ balancing that enforces fair traffic across expert groups, and (ii) $\textit{intra-group}$ specialization that promotes complementary expert behaviors while preventing within-group collapse.
Our analysis provides a principled explanation of how our hierarchical objectives reshape the router, thereby promoting stable specialization and mitigating collapse.
We observe consistent improvements over recent sparse-routing and grouped-MoE baselines across NLP and vision benchmarks, and confirm robustness via scaling studies (model size, expert count) and targeted ablations. 
In large-scale pre-training on 58B tokens, \texttt{Hi-MoE}-7B achieves a $5.6\%$ perplexity reduction and a $40\%$ improvement in expert balance over \texttt{OLMoE}-7B across diverse evaluation domains. 
\end{abstract}

% %%
% %% The code below is generated by the tool at http://dl.acm.org/ccs.cfm.
% %% Please copy and paste the code instead of the example below.
% %%
% \begin{CCSXML}
% <ccs2012>
%    <concept>
%        <concept_id>10010147.10010257</concept_id>
%        <concept_desc>Computing methodologies~Machine learning</concept_desc>
%        <concept_significance>500</concept_significance>
%        </concept>
%  </ccs2012>
% \end{CCSXML}

% \ccsdesc[500]{Computing methodologies~Machine learning}

% \keywords{Mixture-of-Experts, Large Language Models, Optimization}

\maketitle

\section{Introduction}
Mixture-of-Experts (MoE) architectures have revolutionized deep models by reaching trillion-parameter sizes while maintaining computational efficiency \citep{smoe, switch, tang2025pangu}. The core principle -- routing inputs to a small subset of expert networks -- has proven transformative for large language and multimodal models \citep{deepseek, mixtral, llamamoe, multimodal}. 
However, uneven expert utilization has emerged as a critical challenge in MoE training \citep{soboleva2025router}. Some experts become overactivated while others remain underused, leading to inefficient use of model capacity. This imbalance becomes particularly problematic when experts are distributed across devices. Even if expert usage appears balanced over time, individual batches may trigger uneven activation across devices. This creates load imbalance, degrades parallelism, and increases communication overhead.

\begin{figure}[t]
    \centering    \includegraphics[width=\columnwidth]{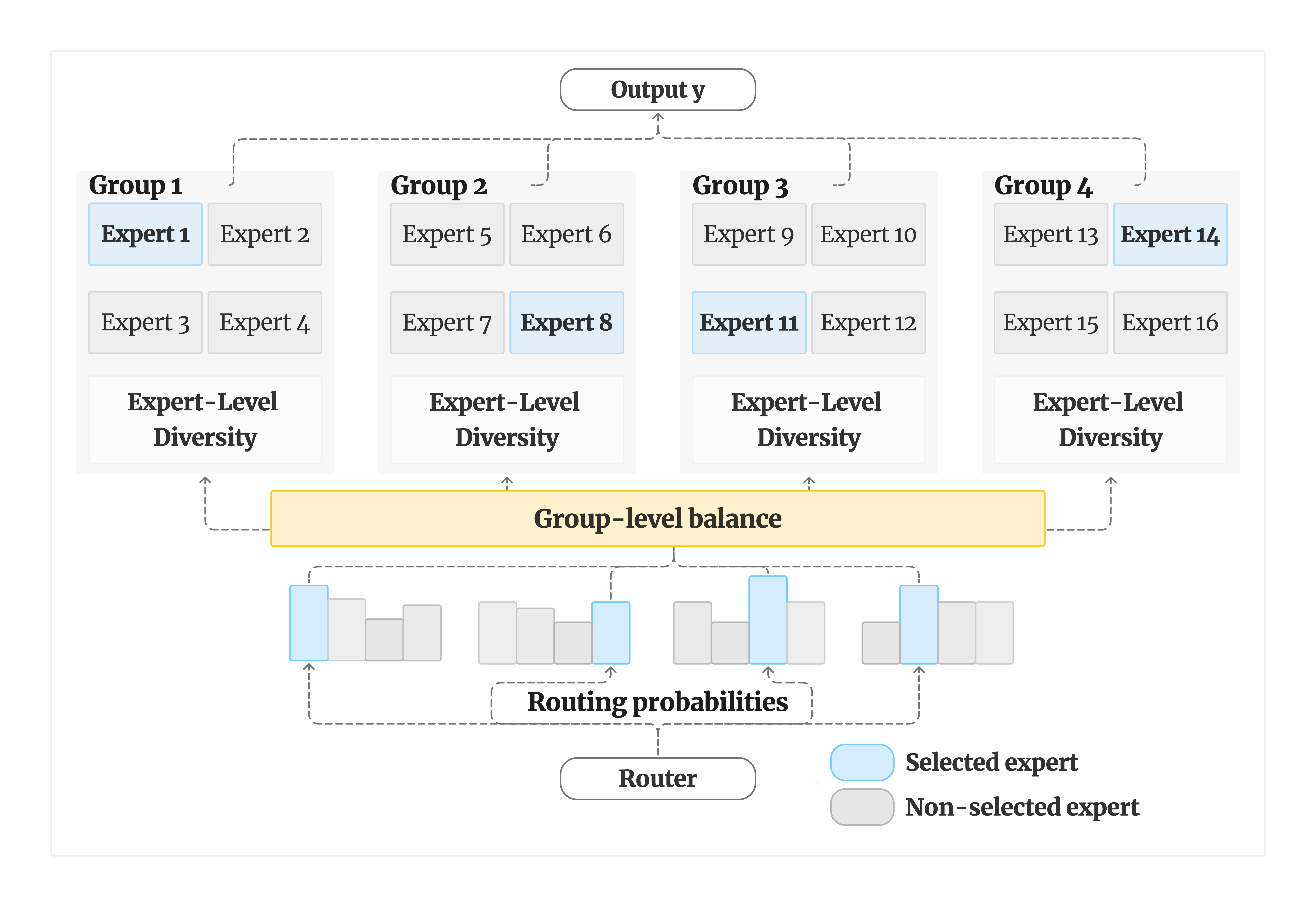}
    \caption{Diagram of the proposed \texttt{Hi-MoE}. Experts are organized into hierarchical groups, promoting complementary specialization within groups and balanced, device-level utilization across groups during routing.}
    \label{fig:diagram}
\end{figure}

Recent approaches have addressed this challenge by grouping experts and enforcing balanced activation through fixed expert selection from each group \citep{moge}. This strategy ensures more uniform computational loads across devices. Yet it comes at a cost: constraining selection to a fixed number of experts per group limits the diversity of reachable expert combinations compared to free selection from all available experts. This reveals a fundamental trade-off in MoE training between balanced expert utilization and meaningful specialization.

To address this challenge, we propose \texttt{Hi-MoE}, a hierarchical framework that combines explicit expert grouping with level-specific diversity and balancing mechanisms. While the idea of hierarchical MoE structures dates back to classic work on mixtures of experts \citep{jordan1994hierarchical} and has been revisited in modern settings \citep{hierarchical2, nguyen2024expert}, we adapt it to the context of grouped experts and large-scale sparse transformers. 
In particular, we merge recent grouped-expert designs with a two-level optimization hierarchy. At the intra-group level, we encourage complementary expert specialization, while at the inter-group level, we promote balanced utilization and prevent group-level collapse. Figure \ref{fig:diagram} illustrates the overall \texttt{Hi-MoE} architecture, where experts are partitioned into fixed groups and trained with distinct optimization strategies at each level.

As a result, \texttt{Hi-MoE} makes expert grouping an explicit optimization mechanism. This hierarchical view yields a simple, modular component that can be dropped into modern MoE architectures to obtain consistent gains in quality, convergence speed, and hardware efficiency.

\section{Related Work}
\paragraph{Mixture-of-Experts Foundations.} The idea of combining multiple specialized predictors traces back to early work on mixtures and ensembles \citep{de1989mixtures, jordan1991, jordan1994}, where the goal was to aggregate weak learners into a stronger model. Subsequent studies showed that individual components in a deep ensemble often capture complementary aspects of the data, and that ensemble effectiveness critically depends on maintaining diversity among its members \citep{allen2020towards}.
However, standard deep ensembles are expensive: they require training and storing many full models, and inference scales linearly with the ensemble size.

MoE addresses this bottleneck by sharing most parameters while activating only a small subset of expert networks per input. 
Formally, one MoE layer replaces the dense feed-forward block with a collection of $N$ expert networks $\{f_i\}_{i=1}^N$, where each expert is a function $f_i:\mathbb{R}^d \rightarrow \mathbb{R}^d$.
A gating network $g:\mathbb{R}^d \rightarrow \mathbb{R}^N$ maps an input representation $\x \in \mathbb{R}^d$ to routing logits, which are converted to probabilities
\[
\bm{\pi}(\x) = \mathrm{softmax}\bigl(g(\x)\bigr) \in \mathbb{R}^N.
\]
An MoE layer then activates only the $K$ most likely experts and combines their outputs.
We write the sparse mixture as
\begin{equation}
\mathrm{SparseMoE}(\x)
\;=\;
\sum_{i=1}^{N}
\Bigl(
\pi_i(\x)\,\mathbb{I}\{ i \in \mathrm{Top}\text{-}K(\bm{\pi}(\x)) \}
\Bigr)\, f_i(\x),
\label{eq:standard_moe}
\end{equation}
where $\mathrm{Top}\text{-}K(\bm{\pi}(\x))$ returns the indices of the $K$ largest entries of $\bm{\pi}(\x)$, and $\pi_i(\x)$ denotes the $i$-th component.

In practice, a small subset of experts often attracts most tokens while many experts remain almost idle, degrading both specialization and effective capacity.
This phenomenon, commonly referred to as \emph{routing collapse}, has been observed to cause representation collapse and severe load skew across experts in large-scale sparse models \citep{representationcollapse}.
From a systems perspective, such imbalances translate into stragglers and underutilized devices. From a modeling perspective, they undermine the very goal of MoE, namely to learn diverse expert behaviors.
In what follows, we review modern routing and load-balancing strategies, and distill the insights that motivate our hierarchical treatment of specialization and balance in \texttt{Hi-MoE}.

\begingroup
\renewcommand{\thefootnote}{}% убирает номер
\footnotetext{Sometimes the TopK operator in Equation (\ref{eq:standard_moe}) is applied before the softmax, but here we continue with this variant as it tends to perform slightly better.
% The option with the inversion order is discussed in the Appendix.
}
\addtocounter{footnote}{-1}% чтобы не увеличить счётчик
\endgroup

\paragraph{Loss-based load balancing.}
A standard way to counter routing collapse is to augment the training objective with an auxiliary load-balancing term $\mathcal{L}_{\text{load}}$ in addition to the task loss $\mathcal{L}_{\text{task}}$.
The goal of $\mathcal{L}_{\text{load}}$ is to encourage the router to spread tokens and routing probability mass across experts, rather than letting a few experts dominate.
GShard formulation \citep{gshard} defines the auxiliary loss as
\begin{equation}
    \label{eq:loss}
    \mathcal{L}_{\text{load}}
    \;=\;
    \alpha \cdot N\cdot \sum_{i=1}^N h_i \cdot P_i,
\end{equation}
where $h_i$ is the fraction of tokens dispatched to expert $i$ in a batch $\mathcal{B}$ of size $|\mathcal{B}|$,
$
h_i
=
\frac{1}{|\mathcal{B}|}
\sum_{\mathbf{x} \in \mathcal{B}}
\left[\operatorname{TopK}\big(\bm{\pi}(\mathbf{x})\big)\right]_i,
$
and $P_i$ is the mean routing probability for expert $i$,
$P_i
=
\frac{1}{|\mathcal{B}|}
\sum_{\mathbf{x} \in \mathcal{B}}
\pi_i(\mathbf{x}).$
The coefficient $\alpha$ controls the strength of this regularizer.
Minimizing \eqref{eq:loss} drives both the token histogram $\{h_i\}$ and the probability histogram $\{P_i\}$ toward uniformity, simultaneously regularizing which experts are selected and how confidently they are chosen.

Subsequent works refine this basic idea rather than replacing it.
Techniques such as logit normalization, temperature scaling, and per-layer or even per-token adaptive coefficients for $\mathcal{L}_{\text{load}}$ make the balancing signal more robust and easier to tune in deep stacks of MoE layers \citep{switch, skywork, glam, stmoe, openmoe, deepseekmoe, shen2024jetmoe}.
These advances reduce sensitivity to hyperparameters and improve training stability across layers, tasks, and batch regimes.

However, all loss-based approaches share a structural challenge: the router is optimized under a compound objective $\mathcal{L}_{\text{task}} + \mathcal{L}_{\text{load}}$.
As analyzed in \citet{lossfree}, the gradients from $\mathcal{L}_{\text{load}}$ can conflict with task gradients, distorting the optimization trajectory of both the router and the experts.
In practice, this interference can slow convergence or cap final quality, especially when strong balancing is required to prevent collapse.

\paragraph{Loss-free and biased load balancing.}
A more radical alternative is loss-free balancing. These approaches typically modify the dispatch rule itself, so balancing emerges from the routing mechanism rather than from an auxiliary objective \citep{lossfree, expertchoice}.
Biased strategies explicitly acknowledge that some experts may be legitimately more popular, and they try to balance only what matters for training and systems constraints, such as per-device throughput or worst-case stragglers \citep{megablocks, tutel}.
These trends highlight an important shift: instead of forcing all experts to be equally used, modern MoE training increasingly targets structured balance under practical constraints.
This motivates frameworks that separate (i) semantic specialization mechanisms from (ii) the specific notion of balance that is required by hardware and throughput.

\paragraph{Beyond flat MoE: grouped experts and hierarchical routing.}
Recent work therefore moves toward hardware-aware MoE architectures, where experts are partitioned into groups aligned with devices or shards.
It helps to equalize FLOPs per device and communication volume \citep{gracemoe, moetuner, moge}.
Closely related system frameworks explicitly optimize expert-to-GPU assignment to reduce cross-device transfers and alleviate load skew, often operating on groups rather than individual experts \citep{fastmoe, fastermoe, comet, tutel}.

While grouped routing and other hardware-aligned constraints reliably improve utilization by equalizing per-device FLOPs and communication, they do so by restricting routing freedom.
This restriction reduces the number of reachable expert combinations, which directly weakens diversity and limits conditional capacity.
Moreover, since experts in different groups are optimized on comparable token mixtures, they may converge to redundant functions.
As a result, grouping can replace expert-level collapse with a diversity collapse across groups: the system is balanced, yet many groups implement redundant behaviors.
A complementary line of work attacks redundancy in optimizer or weight space during fine-tuning.
For example, \citet{liu2024diversifying} use orthogonal expert updates to diversify MoE representations.
This is complementary to our setting: \texttt{Hi-MoE} acts directly in routing space during pre-training, before routing patterns collapse into redundant expert usage.

To address these challenges, we introduce \texttt{Hi-MoE}, a hierarchical framework that explicitly promotes complementary specialization within groups while preserving balanced group utilization at the system level.
While the idea of hierarchical MoE structures dates back to classic work on mixtures of experts \citep{jordan1994hierarchical} and has been revisited in modern settings \citep{hierarchical2, nguyen2024expert}, we adapt it to the context of grouped experts and large-scale sparse transformers.
In particular, at the intra-group level, we encourage complementary expert specialization, while promoting balanced utilization at the inter-group level.

\subsection*{Contributions}
Our contributions are summarized as follows:
\begin{itemize}
  \item We identify a fundamental balance--diversity tension in MoE routing: enforcing device-level balance via strong group constraints can stabilize utilization while increasing redundancy across experts and groups.

  \item We propose \texttt{Hi-MoE}, a hierarchical optimization framework that mitigates this tension through two complementary objectives: (i) an \emph{intra-group} diversity signal that discourages redundant co-activation and promotes complementary specialization, and (ii) an \emph{inter-group} balancing objective that equalizes device-aligned traffic and stabilizes training.

  \item \texttt{Hi-MoE} is a drop-in objective for standard top-$k$ MoE layers and routing pipelines, compatible with common balancing losses and routing constraints. It augments rather than replaces existing stabilization mechanisms.
\setcounter{footnote}{0}
  \item We demonstrate consistent gains from vision to language, including Swin-MoE variants on ImageNet, nanoGPT-1B pre-training on OpenWebText, and large-scale pre-training against OLMoE-7B. Across settings, \texttt{Hi-MoE} simultaneously improves load balance (lower CV) and boosts final quality. Our code is publicly available on GitHub\footnote{\url{https://github.com/brain-lab-research/Hi-MoE}.}.
\end{itemize}

\section{Hierarchical Mixture-of-Experts}
\paragraph{Setup.} We consider a Transformer block in which the dense FFN sub-layer is replaced by a grouped MoE module.
Let $\mathbf{x} \in \mathbb{R}^d$ denote the token representation in the residual stream.
For clarity, we write the FFN computation as a residual update,
\[
\mathbf{x}^{+} \;=\; \mathbf{x} \;+\; \mathrm{MoE}(\mathbf{x}),
\]
and omit standard components such as normalization and dropout, which are orthogonal to our routing design.

The MoE module contains $N$ experts $\{f_i\}_{i=1}^N$, where each expert is a feed-forward network $f_i:\mathbb{R}^d \rightarrow \mathbb{R}^d$.
In grouped MoE, experts are partitioned into $M$ disjoint groups
$\mathcal{G}_1, \ldots, \mathcal{G}_M$ (often aligned with devices),
and routing is constrained to activate a fixed number of experts \emph{within each group}, improving per-step device balance.
Concretely, the router produces logits $s(\mathbf{x}) \in \mathbb{R}^N$ and probabilities $p(\mathbf{x})=\mathrm{softmax}(s(\mathbf{x}))$.
For each group $m$, we select $K_m$ experts using a group-local Top-$K_m$ operator,
\begin{equation*}
\mathrm{MoE}(\mathbf{x})
\;=\;
\sum_{m=1}^{M}\;\sum_{i \in \mathcal{G}_m}
\Bigl(\, p_i(\mathbf{x})\cdot \mathbb{I}\{ i \in \mathrm{Top}\text{-}K_m(\mathbf{p}_{\mathcal{G}_m}(\mathbf{x})) \}\Bigr)\, f_i(\mathbf{x}),
% \label{eq:grouped-moe}
\end{equation*}
where $\mathbf{p}_{\mathcal{G}_m}(\mathbf{x}) := (p_i(\mathbf{x}))_{i\in\mathcal{G}_m}$ denotes the subvector of routing probabilities restricted to group $\mathcal{G}_m$.
A common choice is $K_m = K/M$, which enforces the same number of activated experts in every group.

\paragraph{Loss functions.} Our method relies on optimizing the task loss $\mathcal{L}_{\text{task}}$ together with a load-balancing objective $\mathcal{L}_{\text{load}}$, augmented by the hierarchical components introduced below.
This combination is important for stable routing optimization in hierarchical MoE: $\mathcal{L}_{\text{task}}$ provides the semantic learning signal, $\mathcal{L}_{\text{load}}$ prevents degeneracies in sparse dispatch, and our regularizers shape \emph{where} balance and diversity are enforced.
While we use the GShard-style definition of $\mathcal{L}_{\text{load}}$ in \eqref{eq:loss}, our framework does not depend on this particular choice and is compatible with alternative load-balancing terms.

\paragraph{Routing distribution.} For any token representation $\mathbf{x}$, we consider a hierarchical routing distribution that factorizes over groups and experts,
\begin{align}
p(e \mid \mathbf{x})
\;=\;
p(g \mid \mathbf{x})\, p(e \mid \mathbf{x}, g),
\qquad e \in \mathcal{G}_g .
\label{eq:hier_prod}
\end{align}
This factorization is natural in grouped MoE: $p(g\mid \mathbf{x})$ captures how much routing mass is assigned to each group (e.g., a GPU shard), while $p(e\mid \mathbf{x}, g)$ captures specialization \emph{within} the selected group.
Expanding the squared $\ell_2$ concentration of the global routing distribution yields
\begin{align}
\|p(\cdot \mid \mathbf{x})\|_2^2
&\;=\;
\sum_{e=1}^N p(e \mid \mathbf{x})^2
\;=\;
\sum_{g=1}^M p(g \mid \mathbf{x})^2 \sum_{e \in \mathcal{G}_g} p(e \mid \mathbf{x}, g)^2 \notag\\
&\;=\;
\sum_{g=1}^M p(g \mid \mathbf{x})^2 \, \|p(\cdot \mid \mathbf{x}, g)\|_2^2 .
\label{eq:l2_decomp}
\end{align}
We define the induced group mass as the expert marginal aggregated within each group,
\begin{align*}
p(g \mid \mathbf{x})
\;\triangleq\;
\sum_{e \in \mathcal{G}_g} p(e \mid \mathbf{x}),
% \label{eq:group_mass}
\end{align*}
and for a batch $\mathcal{B}=\{\mathbf{x}_b\}_{b=1}^B$, the corresponding (soft) group load
\begin{align*}
L_g
\;\triangleq\;
\sum_{b=1}^B p(g \mid \mathbf{x}_b).
% \label{eq:group_load}
\end{align*}
Since $L_g$ aggregates how much probability mass is routed to group $g$, balancing $\{L_g\}_{g=1}^M$ is the appropriate notion of GPU utilization, even when the global expert concentration $\|p(\cdot \mid \mathbf{x})\|_2^2$ is fixed.

Crucially, the decomposition \eqref{eq:hier_prod} separates two sources of concentration.
For a fixed token $\mathbf{x}$, modifying how probability mass is \emph{spread across groups} changes the quadratic term $\sum_{g=1}^M p(g \mid \mathbf{x})^2$, which serves as a direct proxy for group-level imbalance.
Among all distributions over $M$ groups, this quantity is minimized by the uniform assignment $p(g \mid \mathbf{x})=1/M$. Therefore, if our goal is to improve per-device utilization, it is beneficial to explicitly penalize group-level concentration.
This motivates the inter-group regularization introduced next.

\subsection{Inter-group regularization}
Grouped routing makes GPU utilization largely depend on how routing mass is distributed across groups.
Under the factorization in \eqref{eq:hier_prod}, group imbalance is captured by the concentration of $p(g\mid \mathbf{x})$, e.g., through the quadratic proxy $\sum_g p(g\mid \mathbf{x})^2$.
In practice, however, the executed computation is sparse: after routing, only Top-$K$ experts are actually dispatched.
We therefore regularize the post-selection routing weights, focusing the signal on the experts that incur real compute and communication.

Let $\bm{\pi}(\mathbf{x}) \in \mathbb{R}^N$ denote the router probabilities and define the selected (sparsified) weights
\begin{equation}
\widetilde{\boldsymbol{\pi}}(\mathbf{x})
\;:=\;
\bigoplus_{m=1}^{M}
\left(
\mathbf{p}_{\mathcal{G}_m}(\mathbf{x})
\odot
\mathbf{1}_{\mathrm{Top}\text{-}K_m\!\left(\mathbf{p}_{\mathcal{G}_m}(\mathbf{x})\right)}
\right),
\label{eq:topk_pi}
\end{equation}
where $\mathbf{1}_{\mathrm{Top}\text{-}K_m(\cdot)}$ is the binary Top-$K_m$ indicator mask and non-selected entries are set to zero.
We then introduce the inter-group regularizer
\begin{equation}
\mathcal{R}_{\text{inter}}
\;=\;
\lambda_{\text{inter}} \,\bigl\|\widetilde{\bm{\pi}}(\mathbf{x})\bigr\|_2^2 .
\label{eq:r_inter}
\end{equation}
Minimizing $\|\widetilde{\bm{\pi}}\|_2^2$ discourages overly concentrated routing among the selected experts, which, through \eqref{eq:l2_decomp}, reduces the contribution of group-level concentration and improves uniform utilization across device-aligned groups.
This term consistently lowers group and expert load variance, yielding more even GPU work per step.

At the same time, stronger inter-group equalization has a predictable side effect: it restricts routing freedom.
When tokens are forced to spread more uniformly across groups, the model explores fewer cross-group expert combinations, and experts in different groups can become functionally redundant.
Intuitively, independent groups trained under similar token mixtures may learn similar decision boundaries, so the same token types activate analogous experts in multiple groups.
Indeed, in \eqref{eq:hier_prod}, $\mathcal{R}_{\text{inter}}$ primarily shapes $p(g\mid \mathbf{x})$, but it does not ensure that the conditional distributions $p(e\mid \mathbf{x}, g)$ become complementary across groups.
This motivates an additional intra-group mechanism that explicitly promotes specialization where it is most expressive: within each group.

\subsection{Intra-group regularization}
To counter cross-group redundancy and recover conditional capacity, we encourage experts within each group to diversify their assignments.
A proxy for diversity is the (negative) concentration of the routing distribution: for a fixed token, peaked probabilities correspond to committing to a small subset of experts, while flatter probabilities correspond to mixing many experts.

Concretely, we introduce an anti-concentration regularizer on the pre-selection routing distribution,
\begin{equation}
\mathcal{R}_{\text{intra}}
\;=\;
-\,\lambda_{\text{intra}} \,\bigl\|\bm{\pi}(\mathbf{x})\bigr\|_2^2 .
\label{eq:r_intra}
\end{equation}
Minimizing \eqref{eq:r_intra} increases $\|\bm{\pi}(\mathbf{x})\|_2^2$, thereby reducing within-token overlap and encouraging the router to commit more strongly to a subset of experts.
In isolation, this would risk expert collapse.
In \texttt{Hi-MoE}, however, it is paired with $\mathcal{L}_{\text{load}}$ and $\mathcal{R}_{\text{inter}}$, which keep global expert usage and group traffic balanced while still allowing decisive specialization inside each group.
We emphasize that this is not meant to replace established balancing recipes.
Rather, \eqref{eq:r_intra} is a lightweight, level-specific signal that complements $\mathcal{L}_{\text{task}}$ and $\mathcal{L}_{\text{load}}$ by shaping specialization inside each group.

On its own, intra-group diversification can drift toward degenerate regimes, where a few experts dominate because the router discovers locally sharp partitions.
We therefore pair \eqref{eq:r_intra} with a simple bias-corrected routing computation that discourages persistent over-selection of historically preferred experts,
\begin{equation*}
\bm{\pi}(\mathbf{x})
\;=\;
\mathrm{softmax}\!\left(\frac{\mathbf{g}(\mathbf{x}) - \tau \,\overline{\mathbf{g}}}{T}\right),
\qquad
\overline{\mathbf{g}} \leftarrow \beta \overline{\mathbf{g}} + (1-\beta)\mathbf{g}(\mathbf{x}),
% \label{eq:biascorr}
\end{equation*}
where $\overline{\mathbf{g}}$ is an exponential moving average of router logits.
This correction is a practical stabilizer: it nudges routing away from entrenched modes without introducing another explicit balancing loss, and it preserves the modularity of our framework.

\subsection{Complete objective and evaluation protocol}
The \texttt{Hi-MoE} training objective combines task learning, standard load balancing, and the proposed hierarchical regularizers:
\begin{equation}
\mathcal{L}
\;=\;
\mathcal{L}_{\text{task}}
\;+\;
\mathcal{L}_{\text{load}}
\;+\;
\mathcal{R}_{\text{intra}}
\;+\;
\mathcal{R}_{\text{inter}}.
\label{eq:full_objective}
\end{equation}
In the following, we quantify load balance using the coefficient of variation (CV) of expert loads,
\begin{equation}
\mathrm{CV}
\;=\;
\frac{\sigma_{\text{load}}}{\mu_{\text{load}}},
\label{eq:load}
\end{equation}
where $\sigma_{\text{load}}$ and $\mu_{\text{load}}$ are the standard deviation and mean of token counts per expert.
Lower CV indicates more uniform utilization.
This is the standard MoE load-balance metric used since \citet{smoe}; \citet{lossfree} report a closely related max-mean imbalance metric, reflecting the same goal of measuring load concentration.
\begin{figure}[ht] \centering \includegraphics[width=\columnwidth]{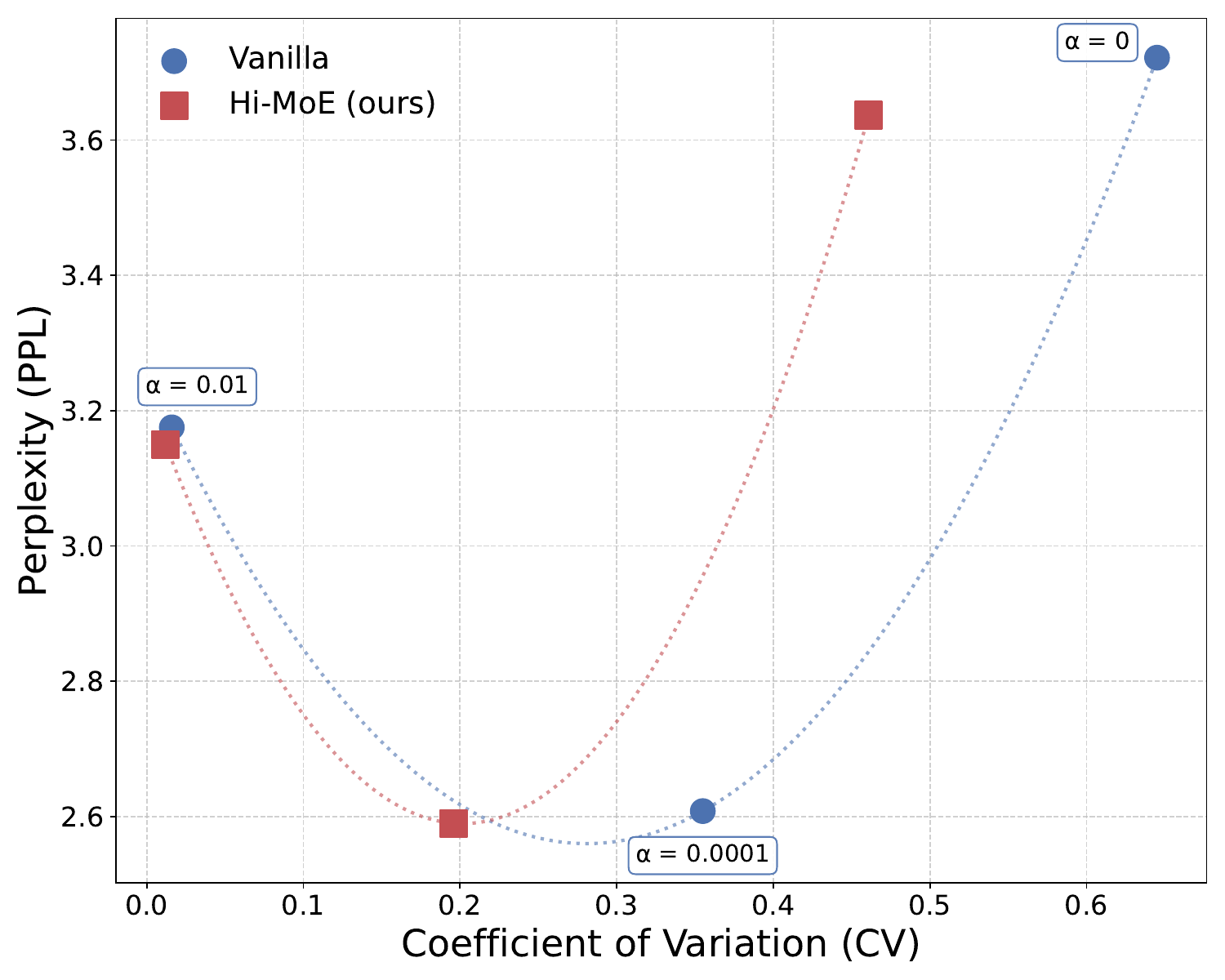} \caption{PPL-CV trade-off.} \label{fig:tradeoff} \end{figure}
Figure~\ref{fig:tradeoff} reports the perplexity--CV trade-off and shows that \texttt{Hi-MoE} expands the Pareto frontier relative to grouped baselines (obtained with $\lambda_{\text{intra}}=\lambda_{\text{inter}}=0$), improving balance without sacrificing quality, and enabling predictable tuning between the two.

\section{Analysis}\label{sec:main:theory}
\texttt{Hi-MoE} is designed to satisfy two requirements that are \emph{simultaneously} important at scale:
(i) \emph{systems-aware balance} (uniform GPU/group utilization), and
(ii) \emph{functional diversity} (experts do not collapse to redundant behaviors).
A key message is that the hierarchical regularizers in \eqref{eq:full_objective} implement a principled \emph{constrained} view of MoE training: maximize specialization subject to balance (hardware constraints). Below we formalize this statement.

Recall that experts are partitioned into $M$ fixed groups $\{\mathcal{G}_g\}_{g=1}^M$.
For a token representation $\mathbf{x}$, let $\widetilde{\bm{\pi}}(\mathbf{x})\in\mathbb{R}^N$ be the post-Top-$K$ routing weights defined in \eqref{eq:topk_pi}.
Define the induced group assignment (a distribution over groups):
\[
r(\mathbf{x})=(r_1(\mathbf{x}),\dots,r_M(\mathbf{x}))\in\Delta^{M-1},
\qquad
r_g(\mathbf{x})
\;:=\;
\frac{\sum_{e\in\mathcal{G}_g}\widetilde{\pi}_e(\mathbf{x})}{\sum_{e=1}^N \widetilde{\pi}_e(\mathbf{x})}.
\]
Let $S_{\max}:=\max_{g\in[M]}|\mathcal{G}_g|$ denote the maximum group size.
For a random token $\mathbf{X}$, define the mean group-load distribution $\bar r:=\mathbb{E}[r(\mathbf{X})]\in\Delta^{M-1}$.
For the reader’s convenience, we provide Table \ref{tab:notation} in Appendix \ref{sec:theory}, which summarizes \emph{all notation used in this paper}.
%-------------------------------------------------------------------------------
\subsection{Inter-group regularizer as a certified upper bound on group imbalance}
\label{sec:main:theory-inter}

\begin{theorem}[Inter-group regularizer controls group CV]
\label{thm:main:inter-controls-cv}
Assume $r(\mathbf{x})$ is normalized so that $\sum_{g=1}^M r_g(\mathbf{x})=1$ for all tokens.
Let $\mathbf{X}$ be a random token representation.
Define the \emph{mean group load distribution} $\bar r := \mathbb{E}[r(\mathbf{X})]\in\Delta^{M-1}$.
Then
\[
\|\bar r\|_2^2
\;\le\;
\mathbb{E}\bigl[\|r(\mathbf{X})\|_2^2\bigr]
\;\le\;
S_{\max}\,\mathbb{E}\bigl[\|\widetilde{\bm{\pi}}(\mathbf{X})\|_2^2\bigr].
\]
Consequently, for any batch whose (soft) group-load vector is proportional to $\bar r$, its group-level coefficient of
variation satisfies
\[
\mathrm{CV}_{\text{group}}^2 \;+\; 1
\;=\;
M\|\bar r\|_2^2
\;\le\;
M S_{\max}\,\mathbb{E}\bigl[\|\widetilde{\bm{\pi}}(\mathbf{X})\|_2^2\bigr].
\]
\end{theorem}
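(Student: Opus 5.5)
The plan is to prove the two inequalities in the chain separately and then convert the bound on $\|\bar r\|_2^2$ into a bound on the group CV by direct computation. One normalization issue must be settled first. Write $A_g(\mathbf{x}) := \sum_{e\in\mathcal{G}_g}\widetilde{\pi}_e(\mathbf{x})$ and $Z(\mathbf{x}) := \sum_{e=1}^N \widetilde{\pi}_e(\mathbf{x})$, so that $r_g = A_g/Z$. I read the theorem's normalization hypothesis as saying that the selected weights $\widetilde{\bm{\pi}}(\mathbf{x})$ are themselves renormalized, so $Z(\mathbf{x})=1$ and $r_g(\mathbf{x}) = A_g(\mathbf{x})$. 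This is the standard renormalized Top-$K$ convention. If $\widetilde{\bm{\pi}}$ is not renormalized, then $Z\le 1$ and the argument below yields only the weaker bound $\|r\|_2^2 \le S_{\max}\|\widetilde{\bm{\pi}}\|_2^2/Z^2$. I expect the main obstacle to be stating this reading explicitly, since the inequality as written genuinely needs it. In the non-renormalized case I would either carry the factor $\mathbb{E}[Z^{-2}]$ along, or lower-bound $Z$ by the probability mass captured by the Top-$K_m$ selection.

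\emph{Step 1 (first inequality).} The map $v\mapsto\|v\|_2^2$ is convex, so Jensen's inequality gives $\|\mathbb{E}[r(\mathbf{X})]\|_2^2\le\mathbb{E}\|r(\mathbf{X})\|_2^2$. More explicitly, there is the bias--variance identity
\[
\mathbb{E}\|r(\mathbf{X})\|_2^2 = \|\bar r\|_2^2 + \mathbb{E}\|r(\mathbf{X})-\bar r\|_2^2 ,
\]
whose last term is nonnegative. This identity also shows that the slack in the first inequality is exactly the token-level dispersion of the group assignments.

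\emph{Step 2 (second inequality, pointwise in $\mathbf{x}$).} For each group, Cauchy--Schwarz over the at most $|\mathcal{G}_g|$ entries of $\widetilde{\bm{\pi}}$ in that group gives
\[
A_g^2 \le |\mathcal{G}_g|\sum_{e\in\mathcal{G}_g}\widetilde{\pi}_e^2 \le S_{\max}\sum_{e\in\mathcal{G}_g}\widetilde{\pi}_e^2 .
\]
In fact only $K_g$ entries are nonzero, so $S_{\max}$ could be sharpened to $\max_g K_g$. Summing over the disjoint groups gives $\|r(\mathbf{x})\|_2^2\le S_{\max}\|\widetilde{\bm{\pi}}(\mathbf{x})\|_2^2$ for every $\mathbf{x}$. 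Taking expectations then yields the stated chain.

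\emph{Step 3 (CV identity).} Suppose the batch group loads satisfy $L_g = c\,\bar r_g$ for some $c>0$. Since $\sum_g\bar r_g=1$, the mean load is $\mu = c/M$. The population variance is $\sigma^2 = \frac1M\sum_g L_g^2-\mu^2 = c^2\bigl(\|\bar r\|_2^2/M - 1/M^2\bigr)$. Dividing by $\mu^2$ gives $\mathrm{CV}_{\text{group}}^2 = M\|\bar r\|_2^2-1$, which is the stated equality. Combining this with Steps 1--2 gives the final bound $\mathrm{CV}_{\text{group}}^2+1\le M S_{\max}\,\mathbb{E}\|\widetilde{\bm{\pi}}(\mathbf{X})\|_2^2$. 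Hence minimizing $\mathcal{R}_{\text{inter}}$ in \eqref{eq:r_inter} minimizes a certified upper bound on group imbalance.
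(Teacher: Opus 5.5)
Your proposal is correct and follows the paper's proof essentially step for step. The first inequality comes from Jensen; the second from the per-group Cauchy--Schwarz estimate of Lemma~\ref{lem:group-sum-bound}, applied pointwise with $w=\widetilde{\bm{\pi}}(\mathbf{X})$; and the consequence from the identity $\mathrm{CV}^2=M\|\widehat L\|_2^2-1$ of Proposition~\ref{thm:cv-l2}. Your normalization caveat is also right, and it is exactly the reading the paper's proof relies on, since it identifies $r_g$ with $\sum_{e\in\mathcal{G}_g}\widetilde{\pi}_e$. With the un-renormalized softmax-then-mask weights of \eqref{eq:topk_pi} on the right-hand side, only your $Z^{-2}$-weighted bound holds: e.g.\ uniform router probabilities with $M$ groups of size $S$ and $K_m=K<S$ give $\|r\|_2^2=1/M$ but $S\,\|\widetilde{\bm{\pi}}\|_2^2=K/(MS)$.
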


\begin{tcolorbox}[colback=gray!0,colframe=gray!30,arc=2mm]
\paragraph{Discussion.}
Since $\mathcal{R}_{\text{inter}}$ in \eqref{eq:r_inter} is exactly proportional to
$\|\widetilde{\bm{\pi}}(\mathbf{x})\|_2^2$,
Theorem~\ref{thm:main:inter-controls-cv} provides a \emph{certified} link between the inter-group term and group/GPU imbalance.
This formalizes why $\mathcal{R}_{\text{inter}}$ is not an ad-hoc trick:
it upper-bounds the same $\ell_2$-concentration that is CV.
\end{tcolorbox}
Proofs of this theorem and the subsequent theory are provided in Appendix \ref{sec:theory}.
%-------------------------------------------------------------------------------
\subsection{Anti-overlap increases expert diversity by decoupling gradients}
\label{sec:main:theory-gradients}

To connect overlap to what changes in training dynamics, we analyze gradient coupling.
Let each expert have parameters $\theta_i\in\mathbb{R}^P$ (same dimension $P$ for all experts).
For a token $\mathbf{x}$ and its associated loss contribution, denote the expert-local gradient by
$g_i(\mathbf{x}) := \nabla_{\theta_i}\ell_i(\mathbf{x};\theta_i)\in\mathbb{R}^P$.
The MoE weighting yields an effective update direction proportional to
$\pi_i(\mathbf{x})\,g_i(\mathbf{x})$.

\begin{theorem}[Overlap bounds cross-expert gradient coupling]
\label{thm:main:grad-coupling}
Assume $\|g_i(\mathbf{x})\|_2 \le G$ for all $i$ and all $\mathbf{x}$ (clipping scenario).
Define the total (soft) cross-expert coupling on a token by
\[
C(\mathbf{x})
\;:=\;
\sum_{i\neq j}\bigl|\langle \pi_i(\mathbf{x})g_i(\mathbf{x}),\,\pi_j(\mathbf{x})g_j(\mathbf{x})\rangle\bigr|.
\]
Then
\[
C(\mathbf{x})
\;\le\;
G^2\sum_{i\neq j}\pi_i(\mathbf{x})\pi_j(\mathbf{x})
\;=\;
G^2\bigl(1-\|\bm{\pi}(\mathbf{x})\|_2^2\bigr).
\]
Therefore, maximizing $\|\bm{\pi}(\mathbf{x})\|_2^2$ (i.e., applying $\mathcal{R}_{\text{intra}}$)
provably decreases an upper bound on how strongly different experts are trained on the \emph{same} token update.
\end{theorem}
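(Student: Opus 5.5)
The plan is a direct termwise estimate followed by an algebraic identity; no earlier result beyond the definition of $\bm{\pi}(\mathbf{x})$ as a softmax output is needed. First, for each ordered pair $i\neq j$, pull the scalar weights out of the inner product using bilinearity:
\[
\bigl|\langle \pi_i(\mathbf{x})g_i(\mathbf{x}),\,\pi_j(\mathbf{x})g_j(\mathbf{x})\rangle\bigr|
= \pi_i(\mathbf{x})\pi_j(\mathbf{x})\,\bigl|\langle g_i(\mathbf{x}),g_j(\mathbf{x})\rangle\bigr|.
\]
This uses that $\pi_i(\mathbf{x})\ge 0$, which holds because $\bm{\pi}(\mathbf{x})$ is a softmax output, so $|\pi_i\pi_j|=\pi_i\pi_j$. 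Then apply Cauchy--Schwarz, $|\langle g_i,g_j\rangle|\le \|g_i\|_2\|g_j\|_2$, together with the clipping assumption $\|g_i(\mathbf{x})\|_2\le G$. This bounds each summand by $G^2\pi_i(\mathbf{x})\pi_j(\mathbf{x})$, and summing over all $i\neq j$ gives the first inequality.

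For the equality, expand the square of the total mass:
\[
\Bigl(\sum_{i=1}^N \pi_i(\mathbf{x})\Bigr)^2=\sum_{i=1}^N\pi_i(\mathbf{x})^2+\sum_{i\neq j}\pi_i(\mathbf{x})\pi_j(\mathbf{x}).
\]
Since the softmax normalization gives $\sum_i\pi_i(\mathbf{x})=1$, the left-hand side is $1$. Rearranging yields $\sum_{i\neq j}\pi_i\pi_j=1-\|\bm{\pi}(\mathbf{x})\|_2^2$. The concluding sentence of the theorem then follows because the map $\|\bm{\pi}\|_2^2\mapsto G^2(1-\|\bm{\pi}\|_2^2)$ is strictly decreasing. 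Hence any change in the router that increases $\|\bm{\pi}(\mathbf{x})\|_2^2$, which is exactly what minimizing $\mathcal{R}_{\text{intra}}$ in \eqref{eq:r_intra} does, lowers the bound. In the extreme case $\|\bm{\pi}\|_2^2=1$, i.e.\ a one-hot router, the bound forces $C(\mathbf{x})=0$.

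I do not expect a genuine obstacle here; the only point requiring care is bookkeeping about which weight vector is meant. The theorem uses the pre-selection probabilities $\bm{\pi}(\mathbf{x})$, which sum to one, so the identity is exact. If one instead wanted the statement for the sparsified weights $\widetilde{\bm{\pi}}(\mathbf{x})$ of \eqref{eq:topk_pi}, whose total mass $s\le 1$, the same expansion would give $s^2-\|\widetilde{\bm{\pi}}\|_2^2\le 1-\|\widetilde{\bm{\pi}}\|_2^2$, so only an inequality would survive. I would add this as a remark. I would also note that the bound is on the sum of absolute pairwise couplings, so it does not require any sign or orthogonality assumption on the gradients.
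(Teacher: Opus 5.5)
Your proof is correct and follows essentially the same route as the paper: Cauchy--Schwarz together with the clipping bound gives each summand at most $G^2\pi_i\pi_j$, and the identity $\sum_{i\neq j}\pi_i\pi_j = 1-\|\bm{\pi}\|_2^2$ (which the paper simply invokes for probability vectors, and which you derive by expanding $(\sum_i\pi_i)^2=1$) finishes the argument. Your side remark about the sparsified weights $\widetilde{\bm{\pi}}$ is also correct, though not needed for the statement as posed.
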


\begin{tcolorbox}[colback=gray!0,colframe=gray!30,arc=2mm]
\paragraph{Discussion.}
Theorem~\ref{thm:main:grad-coupling} gives a direct mechanistic explanation:
\emph{anti-overlap reduces gradient alignment/coupling between experts,}
which increases the chance that experts follow different optimization trajectories and learn complementary functions.
\end{tcolorbox}
%-------------------------------------------------------------------------------
\subsection{Intra-group regularizer: maximizing $\|\pi\|_2^2$ increases routing information}
\label{sec:main:theory-mi}

To formalize coverage of experts, we use a standard specialization proxy:
how informative the expert identity is about the input token.
Consider stochastic routing where $E\sim \pi(\mathbf{X})$ given a random token $\mathbf{X}$.
A quantity tightly connected to $\|\pi\|_2^2$ is the collision conditional entropy, which decreases when routing becomes more decisive:
\[
H_2(E\mid \mathbf{X})
\;:=\;
-\log \mathbb{E}_{\mathbf{X}}\Bigl[\sum_{i=1}^N \pi_i(\mathbf{X})^2\Bigr]
\;=\;
-\log \mathbb{E}_{\mathbf{X}}\bigl[\|\pi(\mathbf{X})\|_2^2\bigr].
\]

\begin{theorem}[Balanced marginals $\Rightarrow$ $\|\pi\|_2^2$ maximizes collision mutual information]
\label{thm:main:renyi-mi}
Let $p(i):=\mathbb{E}_{\mathbf{X}}[\pi_i(\mathbf{X})]$ be the marginal expert usage probability.
Define the collision mutual information
\[
I_2(\mathbf{X};E)
\;:=\;
H_2(E) - H_2(E\mid \mathbf{X}),
\qquad
H_2(E):=-\log\sum_{i=1}^N p(i)^2.
\]
If expert usage is perfectly balanced, i.e.\ $p(i)=1/N$ for all $i$, then
\[
I_2(\mathbf{X};E)
\;=\;
\log\Bigl(N\,\mathbb{E}_{\mathbf{X}}[\|\pi(\mathbf{X})\|_2^2]\Bigr),
\]
which is strictly increasing in $\mathbb{E}[\|\pi(\mathbf{X})\|_2^2]$.
\end{theorem}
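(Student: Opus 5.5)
The plan is a direct computation from the definitions: the statement is essentially algebraic once the balanced-marginal hypothesis is substituted.

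First, under $p(i)=1/N$ for all $i$, the marginal collision entropy is
\[
H_2(E)=-\log\sum_{i=1}^N N^{-2}=-\log(1/N)=\log N .
\]
Second, by the definition of $H_2(E\mid\mathbf{X})$ given just before the theorem,
\[
-H_2(E\mid \mathbf{X})=\log \mathbb{E}_{\mathbf{X}}\bigl[\|\pi(\mathbf{X})\|_2^2\bigr].
\]
Adding the two and combining logarithms gives
\[
I_2(\mathbf{X};E)=\log N+\log \mathbb{E}_{\mathbf{X}}\bigl[\|\pi(\mathbf{X})\|_2^2\bigr]=\log\bigl(N\,\mathbb{E}_{\mathbf{X}}[\|\pi(\mathbf{X})\|_2^2]\bigr).
\]

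Third, I will check that the logarithm is well defined. Since $\sum_i\pi_i(\mathbf{X})=1$, Cauchy--Schwarz gives $\|\pi(\mathbf{X})\|_2^2\ge 1/N>0$. Hence the argument $N\,\mathbb{E}[\|\pi\|_2^2]$ is at least $1$, and also $I_2\ge 0$. Strict monotonicity then follows because $t\mapsto\log(Nt)$ is strictly increasing on $(0,\infty)$. I will also note that $\|\pi\|_2^2\le 1$, so $I_2\le\log N$, with equality for one-hot routing.

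The only step needing care is interpreting ``strictly increasing in $\mathbb{E}[\|\pi\|_2^2]$'' correctly. The monotonicity is a statement about the scalar function along the constraint set $\{p(i)=1/N\}$; it does not claim that increasing $\|\pi\|_2^2$ preserves balance. I will say explicitly that balance is maintained by $\mathcal{L}_{\text{load}}$ and $\mathcal{R}_{\text{inter}}$, which is the justification for pairing them with $\mathcal{R}_{\text{intra}}$. Beyond this remark there is no real obstacle.
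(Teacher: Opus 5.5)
Your proposal is correct and follows the paper's proof: both are the same direct substitution, the paper merely writing $I_2(\mathbf{X};E)=\log\bigl(\mathbb{E}[\|\pi(\mathbf{X})\|_2^2]/\sum_i p(i)^2\bigr)$ first and then inserting $\sum_i p(i)^2=1/N$. Your Cauchy--Schwarz check that $\|\pi(\mathbf{X})\|_2^2\ge 1/N$, so the logarithm is well defined and $0\le I_2\le\log N$, is a small extra the paper leaves implicit.
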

\begin{corollary}[Why \texttt{Hi-MoE} is strictly better than balance-only on specialization]
\label{cor:balance-only-mi}
Under balanced marginals $p(i)=1/N$, the \emph{least} specialized conditional routing is token-independent uniform routing
$\pi(\mathbf{x})\equiv (1/N)\mathbf{1}$, which yields $\mathbb{E}\|\pi(\mathbf{X})\|_2^2 = 1/N$ and hence $I_2(\mathbf{X};E)=0$.
Any increase in $\mathbb{E}\|\pi(\mathbf{X})\|_2^2$ (as induced by $\mathcal{R}_{\text{intra}}$)
strictly increases $I_2(\mathbf{X};E)$, i.e.\ increases specialization/coverage.
\end{corollary}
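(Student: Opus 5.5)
The statement to prove is Corollary~\ref{cor:balance-only-mi}. The plan is to read it off from Theorem~\ref{thm:main:renyi-mi}, which under balanced marginals $p(i)=1/N$ gives
\[
I_2(\mathbf{X};E)=\log\bigl(N\,\mathbb{E}_{\mathbf{X}}[\|\pi(\mathbf{X})\|_2^2]\bigr).
\]
Everything in the corollary then reduces to statements about the scalar $q:=\mathbb{E}_{\mathbf{X}}[\|\pi(\mathbf{X})\|_2^2]$ on the set of routers whose marginals are uniform.

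\emph{Step 1: the lower bound on $q$.} For each token, $\pi(\mathbf{x})\in\Delta^{N-1}$. By Cauchy--Schwarz (or Jensen applied to the square), $\|\pi(\mathbf{x})\|_2^2\ge \tfrac1N\bigl(\sum_i\pi_i(\mathbf{x})\bigr)^2=\tfrac1N$. Equality holds iff $\pi(\mathbf{x})=\tfrac1N\mathbf{1}$. Taking expectations gives $q\ge 1/N$, hence $I_2(\mathbf{X};E)\ge \log 1=0$.

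\emph{Step 2: characterize the minimizer.} Equality $q=1/N$ forces $\pi(\mathbf{X})=\tfrac1N\mathbf{1}$ almost surely. This router is token-independent and uniform, and it does satisfy the balance constraint $p(i)=1/N$. So among balanced routers it is the unique (a.s.) minimizer of $q$, and therefore of $I_2$. Plugging it in gives $q=1/N$ and $I_2=0$. This justifies calling it the least specialized router.

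\emph{Step 3: strict monotonicity.} The map $q\mapsto\log(Nq)$ is strictly increasing on $(0,\infty)$. So any increase of $q$ within the balanced class strictly increases $I_2$.

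The only subtle point, and the one I would flag explicitly, is the phrase ``as induced by $\mathcal{R}_{\text{intra}}$''. The monotonicity claim is valid only while the marginals stay exactly uniform, since otherwise $H_2(E)$ changes too. I would therefore state the conclusion as a comparison within the constraint set $\{p(i)=1/N\}$, which is maintained by $\mathcal{L}_{\text{load}}$ and $\mathcal{R}_{\text{inter}}$. I would not claim that the regularizer's gradient dynamics preserve this constraint.
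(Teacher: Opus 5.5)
Your proposal is correct and follows the paper's route: the corollary is read off from the identity $I_2(\mathbf{X};E)=\log\bigl(N\,\mathbb{E}[\|\pi(\mathbf{X})\|_2^2]\bigr)$ of Theorem~\ref{thm:main:renyi-mi}, by substituting $\pi\equiv(1/N)\mathbf{1}$ and using strict monotonicity of $\log$. Your Cauchy--Schwarz bound $\|\pi(\mathbf{x})\|_2^2\ge 1/N$ and its equality case add something useful: they prove the ``least specialized'' claim (unique a.s.\ minimizer, $I_2\ge 0$), which the paper only asserts. Your caveat that monotonicity holds only within the exactly balanced class is also right, and it matches the corollary's standing hypothesis.
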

\begin{tcolorbox}[colback=gray!0,colframe=gray!30,arc=2mm]
\paragraph{Discussion.}
Corollary~\ref{cor:balance-only-mi} precisely captures the intended message:
\emph{for the same global balance}, \texttt{Hi-MoE} pushes the router toward higher expert--input mutual information,
which is a rigorous proxy for experts covering different parts of the input space.
\end{tcolorbox}
%-------------------------------------------------------------------------------
\subsection{\texttt{Hi-MoE} objective}
\label{sec:main:theory-lagrangian}
Define two measurable costs:
\[
\underbrace{\mathcal{C}_{\text{sys}} := \mathbb{E}\bigl[\|\widetilde{\bm{\pi}}(\mathbf{X})\|_2^2\bigr]}_{\text{proxy for group/GPU imbalance }},
\qquad
\underbrace{\mathcal{C}_{\text{ov}} := \mathbb{E}\bigl[1-\|\bm{\pi}(\mathbf{X})\|_2^2\bigr]}_{\text{routing overlap}}.
\]
Then the following constrained problem directly encodes our intent:
\begin{equation}
\min \;\; \mathcal{L}_{\text{task}} + \mathcal{L}_{\text{load}}
\quad\text{s.t.}\quad
\mathcal{C}_{\text{sys}}\le \varepsilon_{\text{sys}},
\;\;
\mathcal{C}_{\text{ov}}\le \varepsilon_{\text{ov}}.
\label{eq:main:constrained}
\end{equation}
Finally, the Lagrangian relaxation of \eqref{eq:main:constrained} is
\[
\mathcal{L}_{\text{task}}+\mathcal{L}_{\text{load}}
+\lambda_{\text{inter}}\,\mathbb{E}\|\widetilde{\bm{\pi}}(\mathbf{X})\|_2^2
+\lambda_{\text{intra}}\,\mathbb{E}\bigl[1-\|\bm{\pi}(\mathbf{X})\|_2^2\bigr],
\]
which is equivalent (up to an additive constant $\lambda_{\text{intra}}$) to
\[
\mathcal{L}_{\text{task}}+\mathcal{L}_{\text{load}}
+\mathbb{E}\Bigl[\lambda_{\text{inter}}\|\widetilde{\bm{\pi}}(\mathbf{X})\|_2^2 - \lambda_{\text{intra}}\|\bm{\pi}(\mathbf{X})\|_2^2\Bigr],
\]
i.e.\ to adding $\mathcal{R}_{\text{inter}}$ and $\mathcal{R}_{\text{intra}}$ as in \eqref{eq:r_inter}--\eqref{eq:r_intra}.

\begin{tcolorbox}[colback=gray!0,colframe=gray!30,arc=2mm]
\paragraph{Conclusion.}
We show that the two coefficients
$\lambda_{\text{inter}},\lambda_{\text{intra}}$ are interpretable as \emph{Lagrange multipliers} of a principled
balance--specialization constrained optimization problem.
Combined with Theorems~\ref{thm:main:inter-controls-cv}, \ref{thm:main:grad-coupling}, and \ref{thm:main:renyi-mi}, this yields a coherent explanation:
\texttt{Hi-MoE} enforces hardware-aligned balance while increasing specialization (reducing overlap and increasing routing information).
\end{tcolorbox}

\begin{figure*}[t]
\centering
\includegraphics[width=0.9\textwidth]{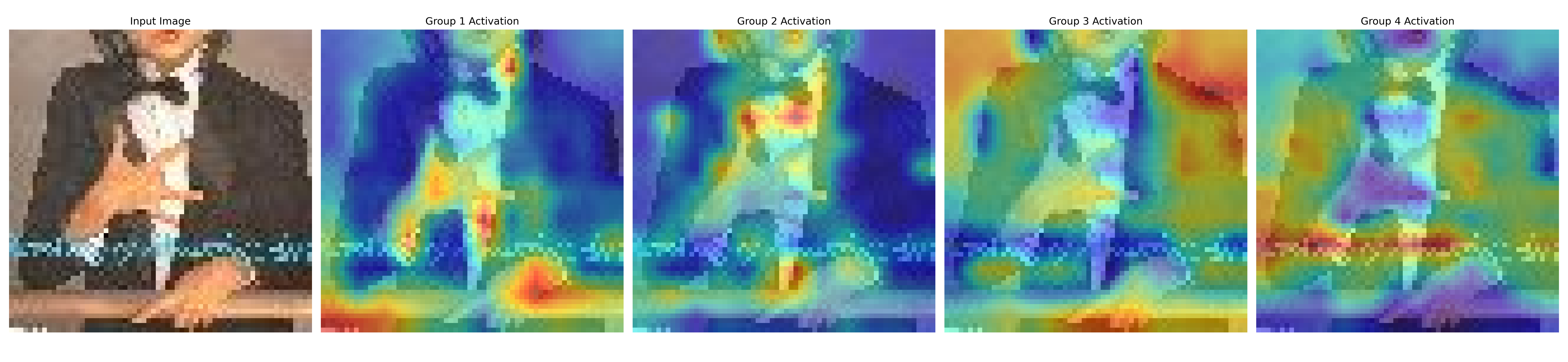}
\caption{Group-level attention patterns of Swin Transformer with \texttt{Hi-MoE} on Tiny ImageNet. Different expert groups automatically specialize on distinct visual features: Group 1 (hands/rail), Group 2 (light-colored objects, e.g., white shirt), Group 3 (dark-colored objects, e.g. suit, surfaces behind human), Group 4 (body parts).}
\label{fig:swin_attention}
\end{figure*}
\section{Experiments}
\label{sec:experiments}
% This section is constructed as follows. 

We evaluate \texttt{Hi-MoE} across a diverse set of domains, datasets, and model scales. Concretely, we consider three pre-training settings: (i) computer vision with Swin Transformer with MoE FFNs (1B) \citep{swin} on Tiny ImageNet \citep{imagenet}, (ii) language modeling with nanoGPT \citep{nanogpt} on OpenWebText \citep{openwebtext}, and (iii) large-scale validation with OLMoE-7B \citep{olmoe} on the Dolma dataset mixture \citep{dolma}.

For all models, we use a hierarchical organization with four expert groups. For Swin-MoE, each group contains 16 experts (64 total), with one expert selected per group (4 total). The nanoGPT model contains eight experts organized into four groups of two, selecting one expert per group (4 total). OLMoE-7B uses 64 experts organized into four groups with two experts selected per group (8 total).

We compare against Vanilla MoE (a non-group counterpart, based on a GShard-style router), MoE with loss-free load balancing \citep{lossfree}, ST-MoE \citep{stmoe}, and MoGE \citep{moge}, a well-established grouped MoE approach. We exclude shared experts for the reasons described in \citep{olmoe}: they reduce the number of possible expert combinations by nearly 90\% while offering no performance benefits over fully routed experts. Given the restrictions imposed by grouping, we prioritize flexibility by allowing all experts to be routed.

We reserve a validation set from the training corpus to evaluate both model quality and load balancing, and a test set for final measurement. We report standard performance metrics -- perplexity (PPL) and accuracy (Acc) -- and assess load balance using the coefficient of variation (CV) of expert loads, which captures the relative dispersion of routing decisions within an MoE layer (Eq.~\eqref{eq:load}).

Other reproducibility details can be found in Appendix \ref{app:repro}.

\subsection{Swin-MoE on Tiny ImageNet pre-training.}
We first evaluate \texttt{Hi-MoE} in a computer-vision setting, measuring both accuracy and routing balance.
We replace the feed-forward layers of the Swin Transformer with \texttt{Hi-MoE} modules, keeping the standard Swin hierarchy and sparsifying 10 blocks in total (9 out of 18 blocks in the third stage and 1 out of 2 blocks in the fourth stage).
The final model has approximately 1B total parameters, while using 83M \emph{active} parameters per token due to sparse expert routing.
We train all variants for 200 epochs on Tiny ImageNet and report both classification accuracy and coefficient of variation (CV) as in \eqref{eq:load}.

\paragraph{Performance metrics.} The final classification performance is presented in Table~\ref{tab:swin_results}. 
\begin{table}[!htbp]
\caption{Final performance on Tiny ImageNet classification. Results are reported as mean$\pm$std over 3 runs with different random seeds.}
\label{tab:swin_results}
\centering
\setlength{\tabcolsep}{3pt}
\begin{tabular}{lccc}
\toprule
\textbf{Method} & \textbf{Acc@1 (\%)} & \textbf{Acc@5 (\%)} & \textbf{CV}  \\
\midrule
Vanilla GShard-style & $58.61\pm 0.24$ & $80.39\pm 0.21$ & $0.33\pm 0.02$ \\
Loss-free balancing & $58.87\pm 0.29$ & $79.65\pm 0.26$ & $0.31\pm 0.02$ \\
MoGE & $58.61\pm 0.27$ & $80.92\pm 0.23$ & $0.30\pm 0.02$ \\
\midrule
\textbf{\texttt{Hi-MoE}} & $\textbf{59.13}\pm \textbf{0.19}$ & $\textbf{81.18}\pm \textbf{0.17}$ & $\textbf{0.29}\pm \textbf{0.01}$ \\
\bottomrule
\end{tabular}
\end{table}
\texttt{Hi-MoE} outperforms all baselines in accuracy while also achieving the best load balance. This is consistent with our hierarchical objective, which encourages activating a broader, more complementary set of experts than vanilla GShard-style or loss-free routing: spreading traffic improves coverage and lets experts specialize over more distinct regions of the input space. Compared to grouped routing, our intra-group anti-overlap term further acts as a diversity regularizer by explicitly discouraging persistent co-activation, increasing within-group heterogeneity and reducing redundancy. Consequently, even when overall load balancing is similar (and grouped methods already balance better than vanilla or loss-free routing), \texttt{Hi-MoE} retains a consistent edge.

% \begin{table}[H]
% \centering
% \setlength{\tabcolsep}{3pt}
% \begin{tabular}{lccc}
% \toprule
% \textbf{Method} & \textbf{Acc@1 (\%)} & \textbf{Acc@5 (\%)} & \textbf{CV}  \\
% \midrule
% Vanilla & $58.61\pm 0.24$ & $80.39\pm 0.21$ & $0.31\pm 0.018$ \\
% Lossfree & $58.87\pm 0.29$ & $79.65\pm 0.26$ & $0.30\pm 0.019$ \\
% MoGE & $58.61\pm 0.27$ & $80.92\pm 0.23$ & $0.30\pm 0.020$ \\
% \midrule
% \textbf{\texttt{Hi-MoE}} & $\textbf{58.93}\pm \textbf{0.19}$ & $\textbf{80.98}\pm \textbf{0.17}$ & $\textbf{0.29}\pm \textbf{0.015}$ \\
% \bottomrule
% \end{tabular}
% \caption{Final performance on Tiny ImageNet classification. Results are reported as mean$_{\pm \text{std}}$ over 3 runs with different random seeds.}
% \label{tab:swin_results}
% \end{table}
\paragraph{Specialization.} We further probe specialization by visualizing group-level attention maps. Figure~\ref{fig:swin_attention} shows that, despite the hard partition of experts into fixed groups, \texttt{Hi-MoE} learns a clear division of labor: each group concentrates on a different subset of visual cues (e.g., body parts vs. foreground objects vs. background surfaces).
This behavior is exactly what the anti-overlap regularizer encourages -- it pushes routing to be more decisive and reduces cross-group co-activation, allowing groups to follow distinct optimization trajectories and specialize rather than collapse to redundant features.

\paragraph{Load balance.} Figure~\ref{fig:swin_expert_freq} compares the expert activation frequency distribution in the 7th MoE layer of Swin-MoE after training on Tiny ImageNet.
\begin{figure}[!htbp]
    \centering
    \includegraphics[width=0.85\columnwidth]{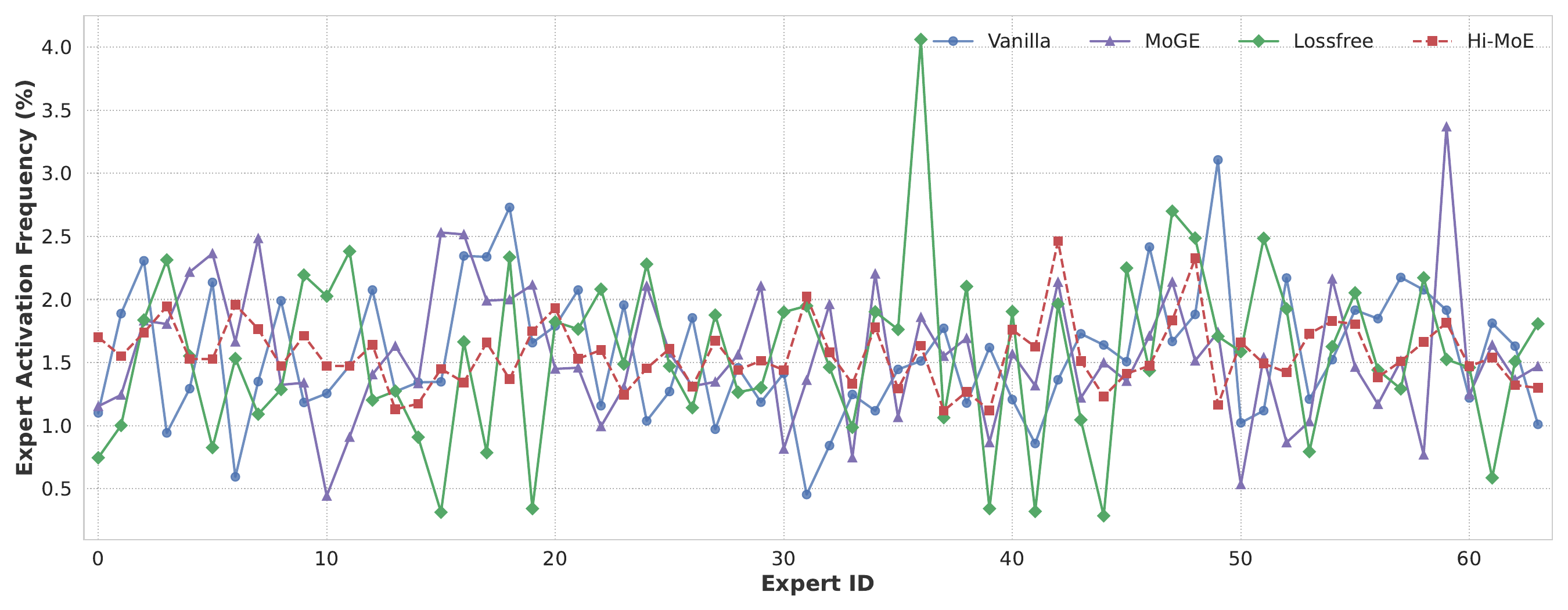}
    \caption{Expert activation frequency distribution in the 7th MoE layer (Third Stage) of Swin-MoE after training on Tiny ImageNet.}
    \label{fig:swin_expert_freq}
\end{figure}
Recall that we use 4 expert groups with 16 experts per group and target near-uniform activation. Vanilla GShard-style MoE is heavily skewed: $3$ of the $4$ most active experts land on the second GPU, creating a hotspot. Loss-free routing exhibits the largest imbalance in activation frequencies, with a few experts over-selected and many under-used, indicating the strongest capacity imbalance and likely explaining its weaker accuracy. MoGE improves inter-group balance but still exhibits intra-group collapse (e.g., on the last GPU one expert dominates while several receive too few tokens). In contrast, our method achieves a more uniform pattern.

\subsection{nanoGPT on OpenWebText.}
We next move from vision to language modeling and evaluate \texttt{Hi-MoE} in an NLP setting using nanoGPT trained on OpenWebText. We integrate \texttt{Hi-MoE} into 12 MoE layers within a 24-layer transformer. The resulting model contains approximately 1.06B total parameters, with 507M parameters active per token. We train the model on OpenWebText for 49B tokens.

\paragraph{Performance metrics.} We begin with the quality metrics presented in Table \ref{tab:combined_results}.
Overall, \texttt{Hi-MoE} achieves the best trade-off between quality and utilization: it improves final model performance while maintaining strong expert balance.

\begin{table}[!htbp]
\caption{Comparison of language modeling performance across nanoGPT architectures. Test metrics are reported as mean$\pm$std over 3 runs with different random seeds.}
\label{tab:combined_results}
\centering
\setlength{\tabcolsep}{4pt}
\begin{tabular}{lccc}
\toprule
\textbf{Method} & \textbf{PPL} & \textbf{CV}\\
\midrule
Vanilla GShard-style & $2.985\pm 0.024$ & $0.31\pm 0.016$ \\
Loss-free balancing & $2.979\pm 0.027$ & $0.37\pm 0.017$ \\
ST-MoE & $2.981\pm 0.022$ & $0.33\pm 0.016$ \\
MoGE& $2.977\pm 0.025$ & $0.25\pm 0.018$ \\
\midrule
\textbf{\texttt{Hi-MoE}} & $\textbf{2.947}\pm \textbf{0.019}$ & $\textbf{0.23}\pm \textbf{0.013}$ \\
\bottomrule
\end{tabular}
\end{table}

\paragraph{Component-wise ablation.} To isolate the contribution of each hierarchical component, Table~\ref{tab:nanogpt_ablation} reports an ablation on nanoGPT-1B. The pattern is consistent: $\mathcal{R}_{\text{inter}}$ primarily improves balance, $\mathcal{R}_{\text{intra}}$ primarily improves quality, and the bias correction acts as a stabilizer rather than the source of the gains.

\begin{table}[!htbp]
\caption{Component-wise ablation of \texttt{Hi-MoE} on nanoGPT-1B.}
\label{tab:nanogpt_ablation}
\centering
\setlength{\tabcolsep}{5pt}
\begin{tabular}{lcc}
\toprule
\textbf{Config} & \textbf{PPL} & \textbf{CV} \\
\midrule
MoGE & 2.977 & 0.25 \\
+ $\mathcal{R}_{\text{intra}}$ & 2.960 & 0.27 \\
+ $\mathcal{R}_{\text{inter}}$ & 2.972 & 0.21 \\
+ bias correction & 2.976 & 0.25 \\
+ $\mathcal{R}_{\text{intra}} + \mathcal{R}_{\text{inter}}$ & 2.950 & 0.23 \\
\midrule
\textbf{\texttt{Hi-MoE}} & \textbf{2.947} & \textbf{0.23} \\
\bottomrule
\end{tabular}
\end{table}

\paragraph{Routing cost.} The hierarchical router adds a small amount of local computation, but the systems overhead is limited. Appendix Table~\ref{tab:router_profile} shows that per-layer router time rises only from $1.3$ to $1.6$ ms, while grouped dispatch cuts synchronization overhead by roughly $3\times$ and makes the average forward pass $12\%$ faster than flat routing (42.3 ms vs.\ 48.0 ms). Relative to MoGE, the remaining end-to-end step-time overhead is approximately $5\%$.

\paragraph{Load balancing.} To analyze load balancing in depth, we examine the expert activation patterns across network layers. 
% We recall that our architecture organizes experts into 4 groups (2 experts each), with global expert IDs sequentially numbered from 0 to 7. 

\begin{figure}[!htbp]
    \centering
    \begin{minipage}[b]{0.47\columnwidth}
        \centering
        \caption*{(a) Vanilla GShard-style}\includegraphics[width=\columnwidth]{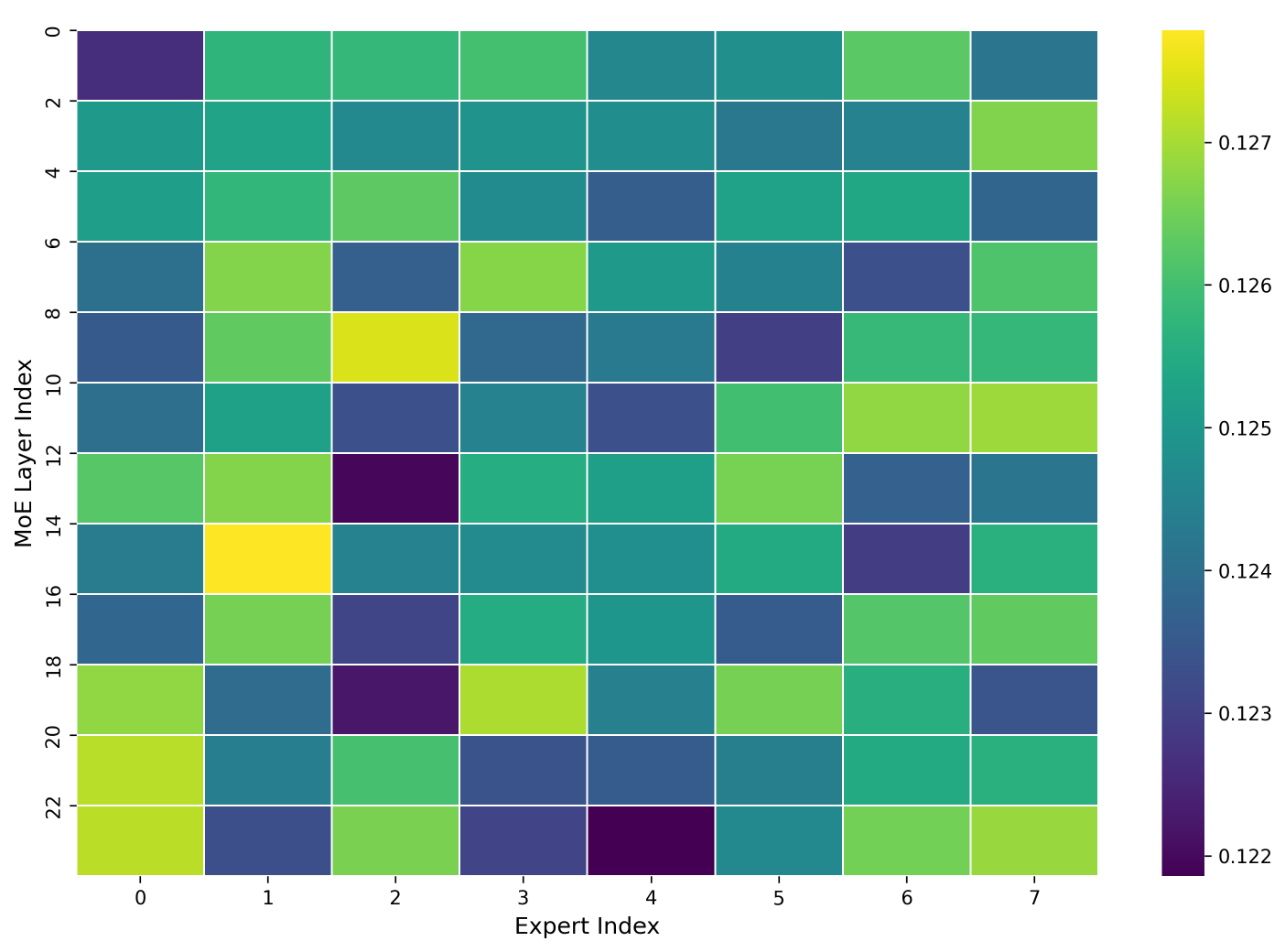}
        
    \end{minipage}
    \hfill
    \begin{minipage}[b]{0.47\columnwidth}
        \centering
        \caption*{(b) Loss-free load balancing}\includegraphics[width=\columnwidth]{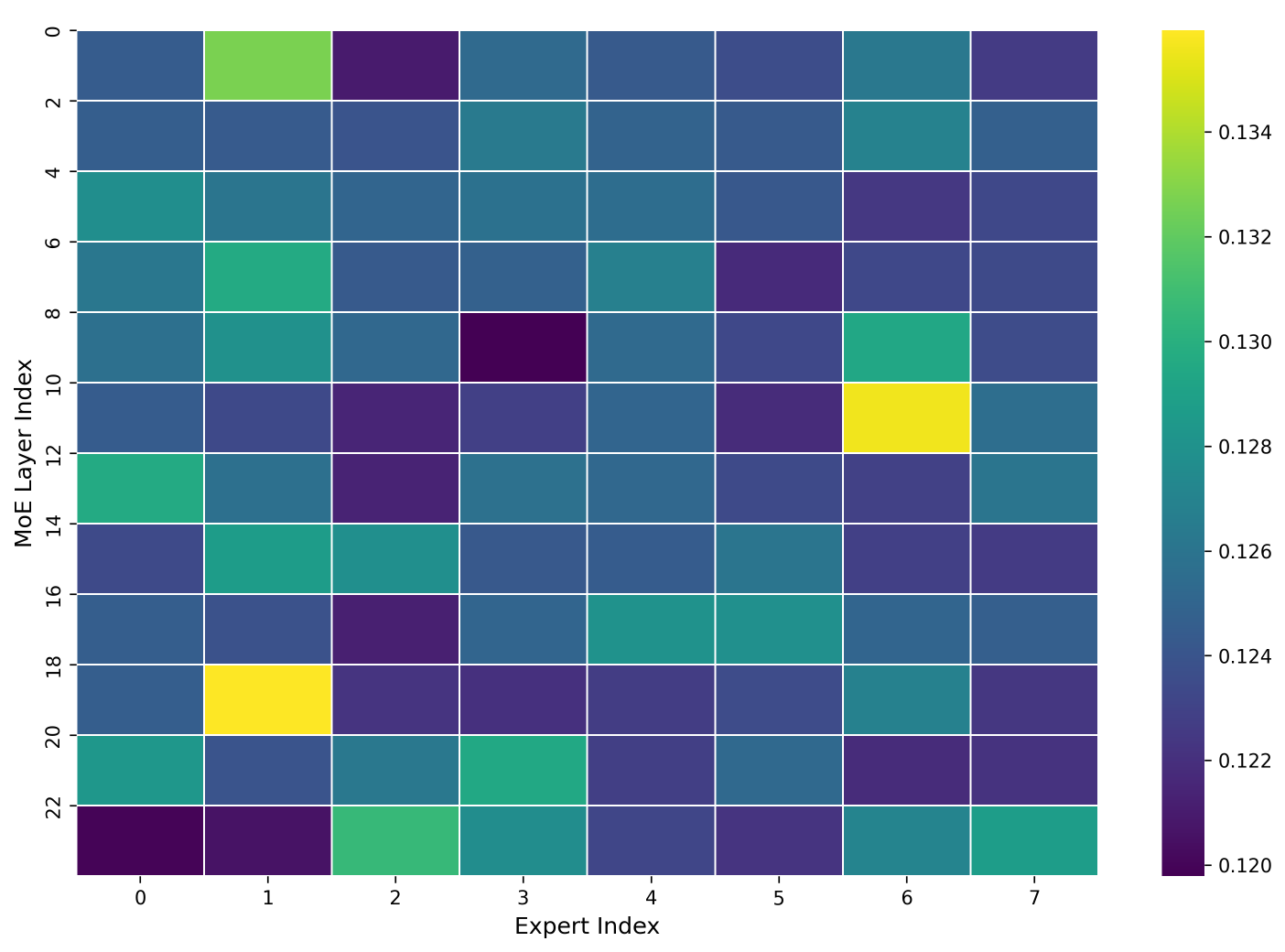}
        
    \end{minipage}

    \begin{minipage}[b]{0.47\columnwidth}
        \centering
        \caption*{(c) MoGE}\includegraphics[width=\columnwidth]{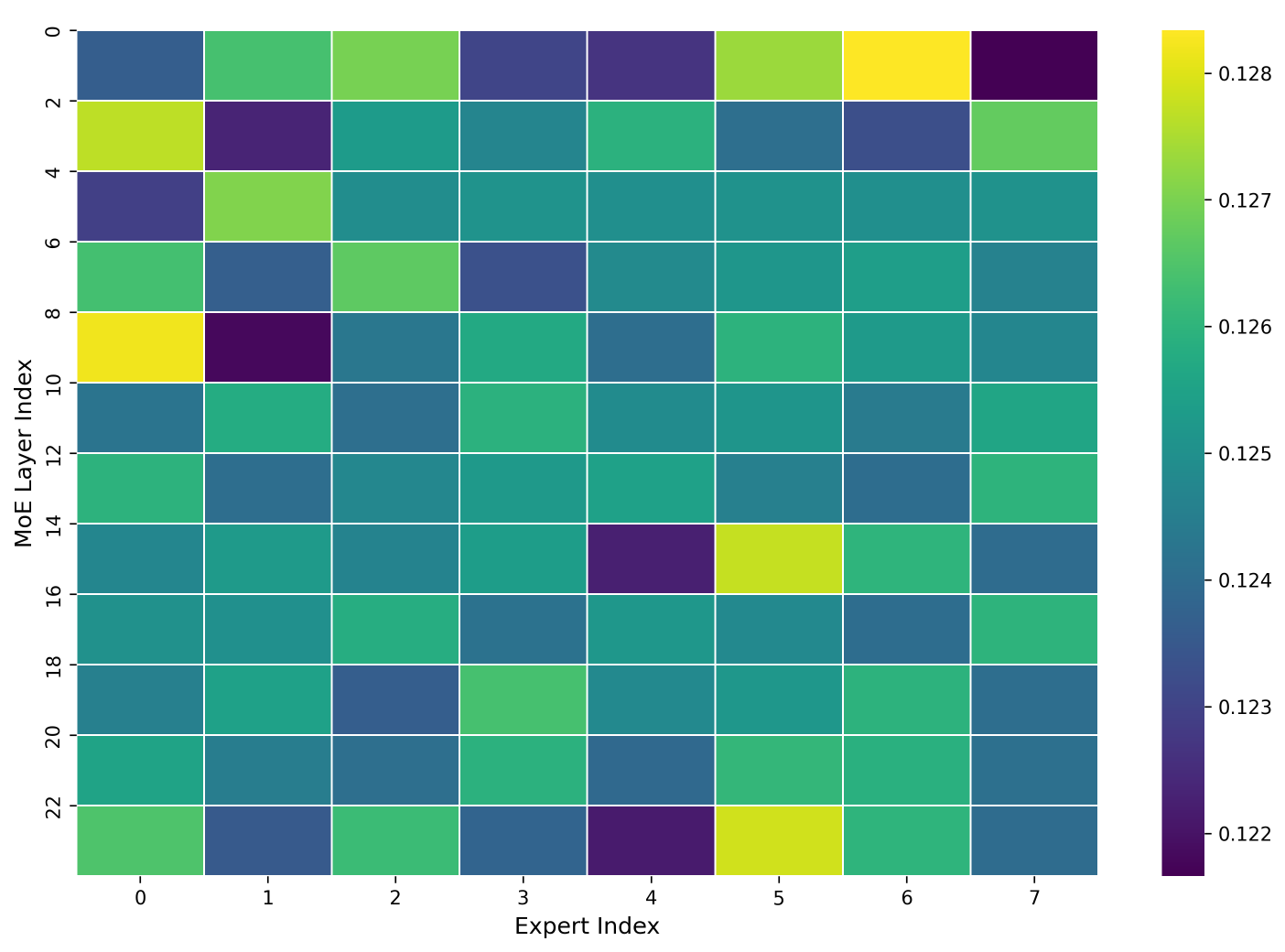}
        
    \end{minipage}
    \hfill
    \begin{minipage}[b]{0.47\columnwidth}
        \centering
        \caption*{(d) \texttt{Hi-MoE}}\includegraphics[width=\columnwidth]{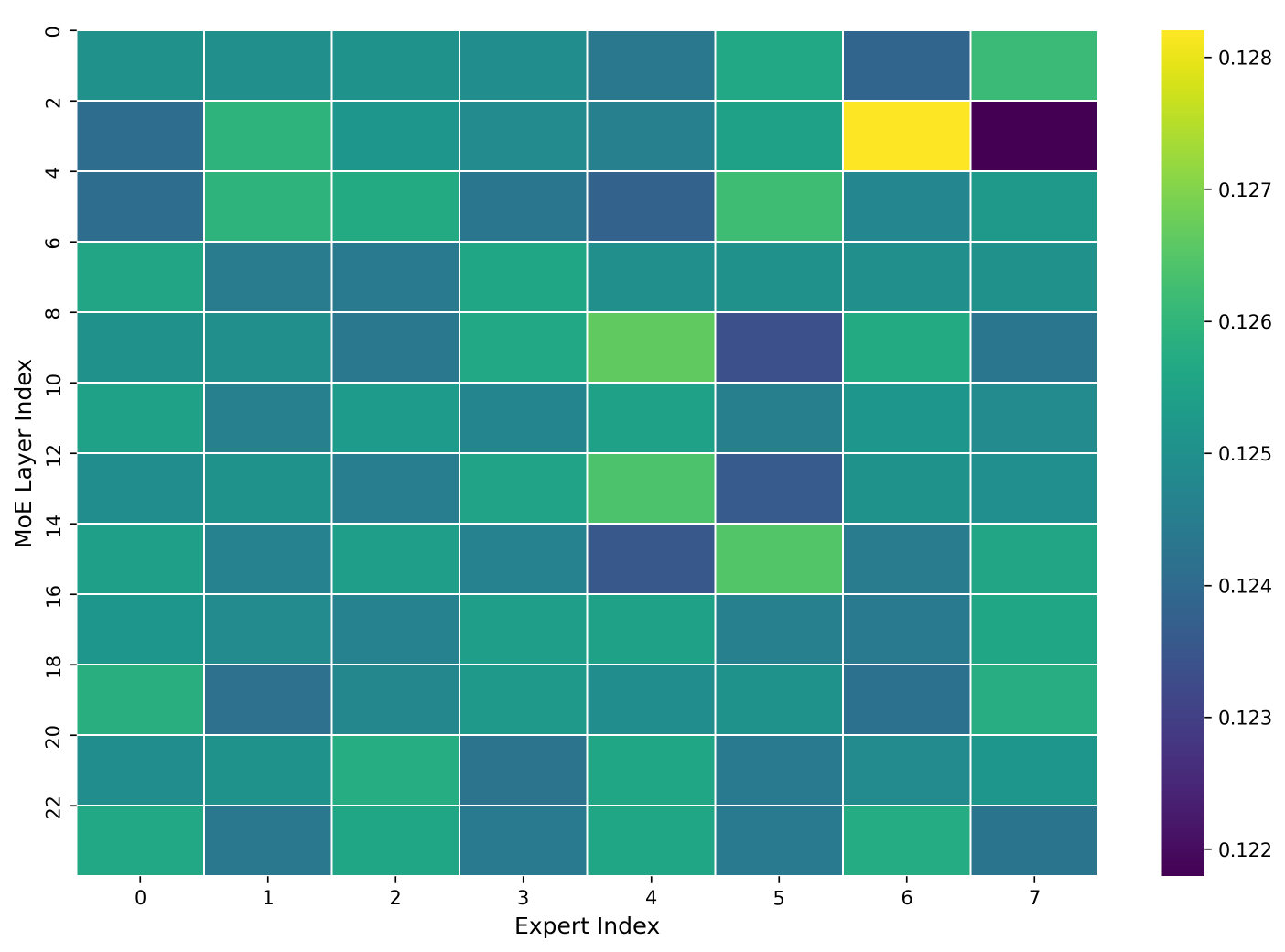}
        
    \end{minipage}
    \caption{Expert activation heatmaps across all 12 MoE layers during nanoGPT-1B training. Subplots show different layers, with experts on the x-axis and training steps on the y-axis.}
    \label{fig:nanogpt_layers}
\end{figure}

Activation maps in Figure~\ref{fig:nanogpt_layers} reveal clear differences in routing. In MoGE (two experts per group), the group-average utilization can look balanced, yet routing is often strongly imbalanced within groups: one expert is consistently preferred while its paired expert is underused. This suggests MoGE mainly stabilizes routing across groups, but does not reliably prevent intra-group imbalance.
Vanilla GShard-style routing may look fine in terms of global averages and can induce less within-device imbalance since it does not enforce group constraints, but it remains noticeably uneven at the device level. Loss-free routing is even less stable, with large under-activated regions and a few sharp spikes.
In contrast, \texttt{Hi-MoE} produces an almost uniform expert-usage pattern.

Figure~\ref{fig:nanogpt_expert_freq} further confirms these observations by showing the final expert activation frequencies in the 5th MoE layer.

\begin{figure}[ht]
    \centering
    \includegraphics[width=0.85\columnwidth]{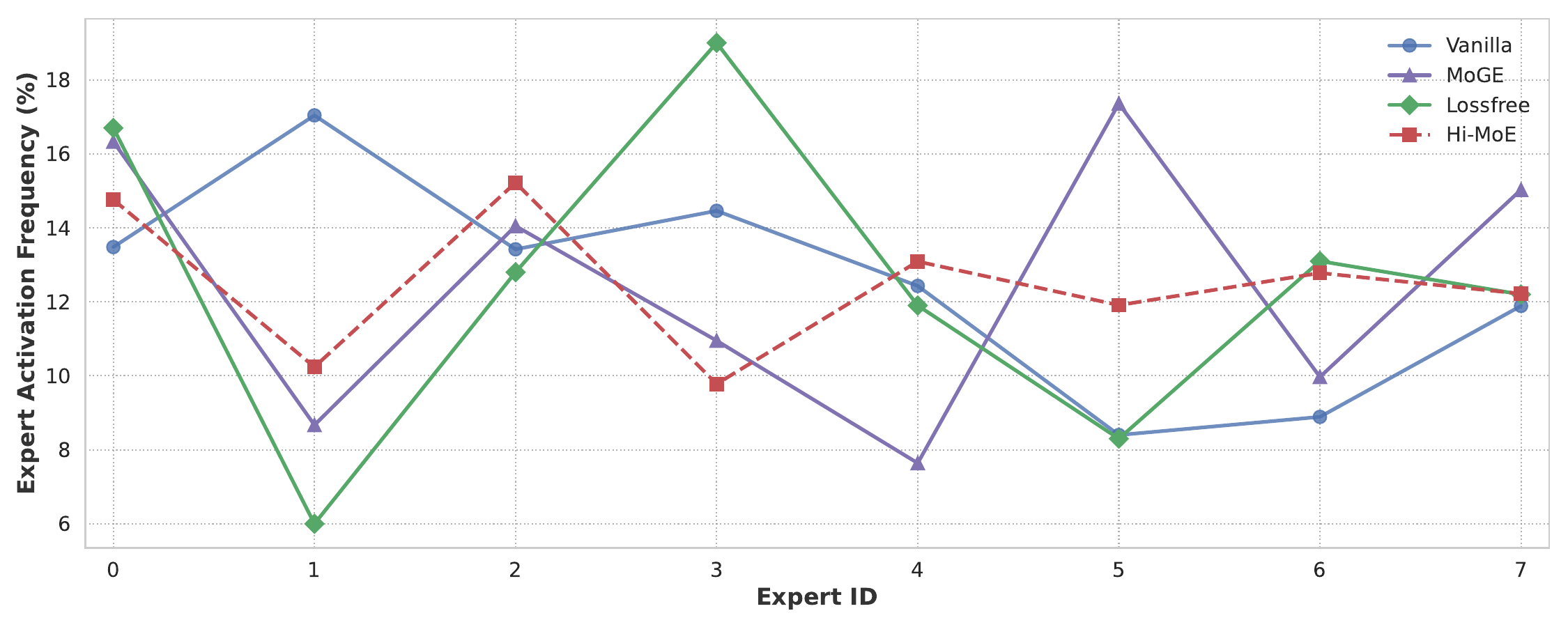}
    \caption{Expert activation frequency distribution in the 5th MoE layer of nanoGPT-1B after training.}
    \label{fig:nanogpt_expert_freq}
\end{figure}
Next, we examine the per-layer group utilization. MoGE and \texttt{Hi-MoE} achieve uniform group load by selecting an equal number of experts from each group. In Figure~\ref{fig:nanogpt_group_layers}, we compare group utilization for Vanilla MoE and the loss-free routing strategy.

\begin{figure}[!htbp]
    \centering
    \begin{minipage}[b]{0.49\columnwidth}
        \centering
        \includegraphics[width=\columnwidth]{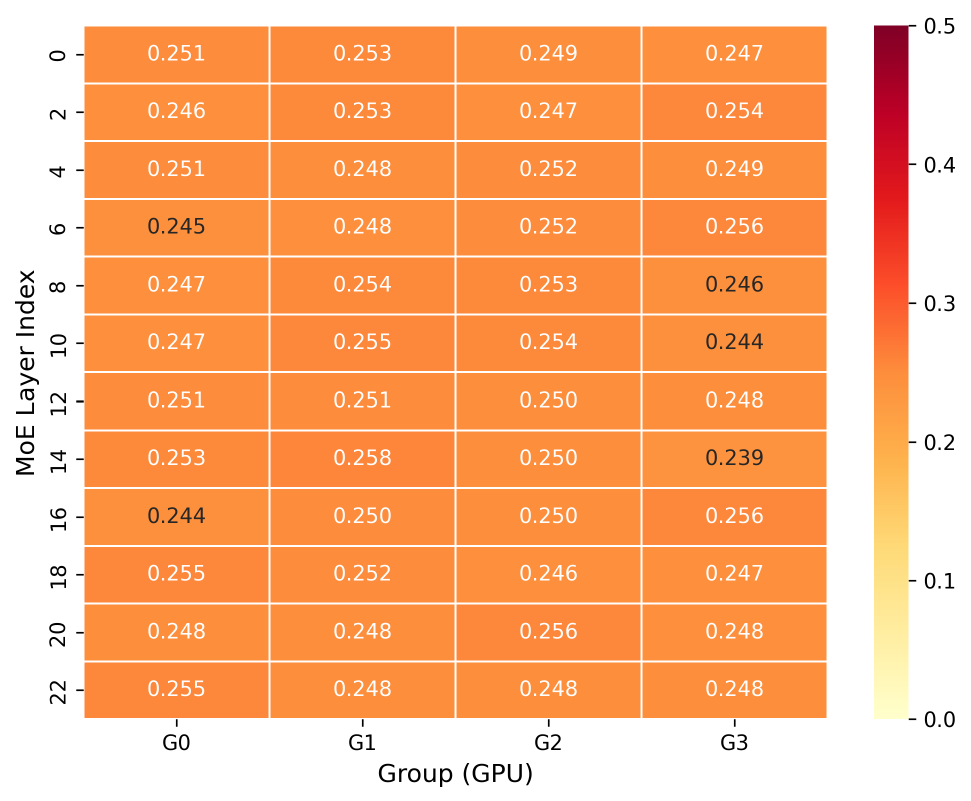}
        \caption*{(a) Vanilla GShard style}
    \end{minipage}
    \hfill
    \begin{minipage}[b]{0.49\columnwidth}
        \centering
        \includegraphics[width=\columnwidth]{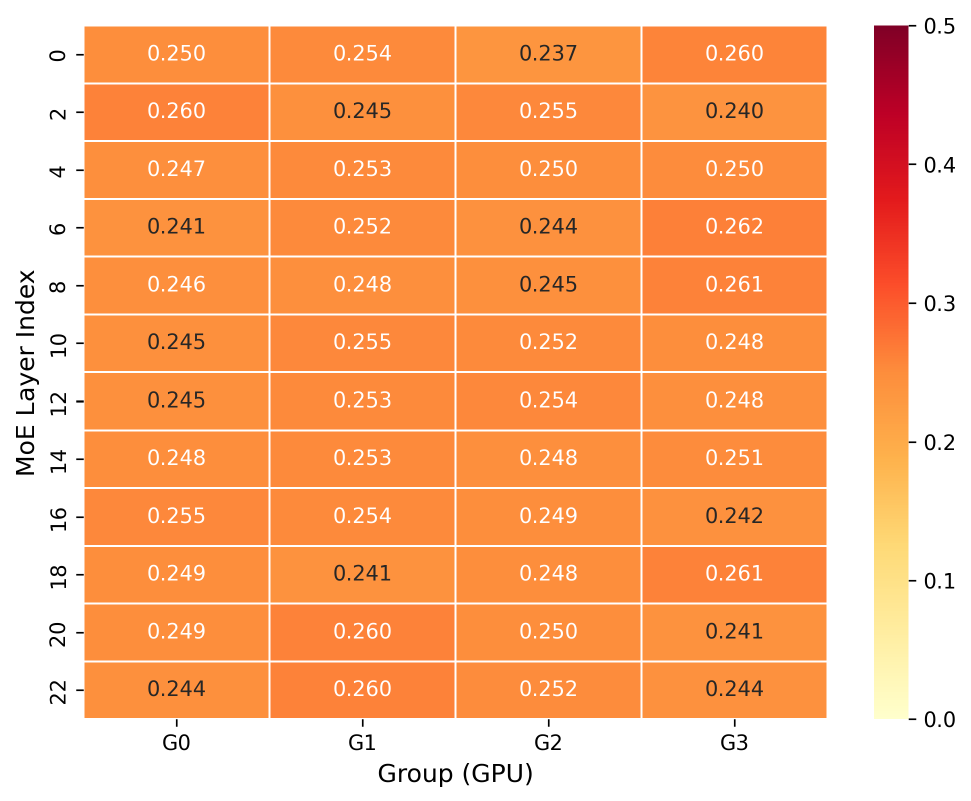}
        \caption*{(b) \texttt{Loss-free load balancing}}
    \end{minipage}
    \caption{Group workload distribution.}
    \label{fig:nanogpt_group_layers}
\end{figure}

Although the aggregate group workload in Figure~\ref{fig:nanogpt_group_layers} appears
nearly uniform, this aggregate view is misleading at the per-token level.
Because Vanilla and loss-free routing do not constrain the number of groups activated
per token, individual tokens tend to concentrate their selected experts on a subset of
groups.
On average, each token activates experts from \emph{only $3.142$ out of $4$ groups}
(GPUs) under Vanilla routing and $3.151$ under loss-free routing.
Thus, despite balanced total workloads, roughly one GPU per
token \emph{receives no computation}, directly reducing device utilization. 
% Nevertheless, MoE modules are typically deployed as components of substantially larger systems. Therefore, we further scale up the model and examine whether the observed benefits persist under more realistic regimes.

\section{Large-Scale Validation: OLMoE-7B vs \texttt{Hi-MoE}-7B}

We perform large-scale pre-training experiments comparing standard OLMoE-7B to \texttt{Hi-MoE}-7B. Both models follow the same backbone configuration: 16 transformer layers with hidden size 2048, 64 experts per MoE layer, and top-$k{=}8$ routing. The \texttt{Hi-MoE} variant additionally structures the 64 experts into 4 hierarchical groups and selects 2 experts per group, preserving the same per-token compute budget. We train both models for 2 epochs on 58.54B tokens from the Dolma mixture, including Algebraic Stack, arXiv, and OpenWebMath.
Due to the computational cost of these large-scale runs, we restrict comparisons to the standard OLMoE-7B baseline. We already benchmarked against a broader set of routing and grouping baselines in earlier sections. Here, our primary goal is to isolate and test the scalability of \texttt{Hi-MoE} under a realistic pre-training regime.

\paragraph{Performance metrics.}
We begin with Table~\ref{tab:olmoe7b_perplexity}, which reports perplexity on 11 validation domains.
\begin{table}[!htbp]
\centering
\caption{Perplexity comparison between OLMoE-7B and \texttt{Hi-MoE}-7B across validation domains.}
\label{tab:olmoe7b_perplexity}
\setlength{\tabcolsep}{4pt}
\begin{tabular}{lccc}
\toprule
\textbf{Domain} & \textbf{OLMoE-7B} & \textbf{\texttt{Hi-MoE}-7B} & \textbf{$\Delta$ (\%)} \\
\midrule
wikitext\_103           & 32.997 & \textbf{31.381} & \textbf{-4.9} \\
c4\_en                  & 39.852 & \textbf{37.345} & \textbf{-6.3} \\
dolma\_books            & 39.704 & \textbf{36.674} & \textbf{-7.6} \\
dolma\_common-crawl     & 42.722 & \textbf{39.872} & \textbf{-6.7} \\
dolma\_pes2o            & 11.685 & \textbf{11.301} & \textbf{-3.3} \\
dolma\_reddit           & 47.023 & \textbf{43.802} & \textbf{-6.8} \\
dolma\_stack            & 3.811  & \textbf{3.765}  & \textbf{-1.2} \\
dolma\_wiki             & 31.831 & \textbf{30.008} & \textbf{-5.7} \\
ice                     & \textbf{23.819} & 23.888 & +0.3 \\
m2d2\_s2orc             & 23.286 & \textbf{21.803} & \textbf{-6.4} \\
pile                    & 13.364 & \textbf{12.978} & \textbf{-2.9} \\
% \midrule
% \textbf{Average}        & 28.190 & \textbf{26.620} & \textbf{-5.6} \\
\bottomrule
\end{tabular}
\end{table}

Beyond perplexity metrics, we evaluate both models on downstream tasks. Table~\ref{tab:olmoe7b_downstream} reports performance on common sense reasoning, question answering, and knowledge benchmarks. All tasks are evaluated with \texttt{lm-evaluation-harness}; unless stated otherwise, results are 0-shot, while MMLU is 5-shot. We report either accuracy (\texttt{acc}) or length-normalized accuracy (\texttt{len\_norm}), following the benchmark default.

\begin{table}[!htbp]
\centering
\caption{Final-checkpoint downstream performance comparison between OLMoE-7B and \texttt{Hi-MoE}-7B.}
\label{tab:olmoe7b_downstream}
\setlength{\tabcolsep}{2pt}
\begin{tabular}{lccc}
\toprule
\textbf{Task} & \textbf{OLMoE-7B} & \textbf{\texttt{Hi-MoE}-7B} & \textbf{$\Delta$ (\%)} \\
\midrule
\multicolumn{4}{l}{\textit{Common Sense Reasoning}} \\
\midrule
COPA (acc)                  & 53.0 & \textbf{57.0} & \textbf{+7.5}  \\
Winogrande (acc)            & 49.1 & \textbf{50.8} & \textbf{+3.5}  \\
CommonsenseQA (len\_norm)   & \textbf{27.4} & 26.6 & -2.9 \\
Social IQA (len\_norm)      & 39.2 & \textbf{39.4} & \textbf{+0.5}  \\
\midrule
\multicolumn{4}{l}{\textit{Question Answering}} \\
\midrule
ARC-Easy (acc)              & 41.4 & \textbf{41.9} & \textbf{+1.2}  \\
ARC-Challenge (len\_norm)   & \textbf{22.7} & 22.1 & -2.6 \\
OpenBookQA (len\_norm)      & 25.2 & \textbf{26.8} & \textbf{+6.3}  \\
% BoolQ (acc)                 & \textbf{57.9} & 56.1 &  \\
% SciQ (acc)                  & \textbf{80.8} & 80.0 &  \\
\midrule
\multicolumn{4}{l}{\textit{General Knowledge}} \\
\midrule
PIQA (len\_norm)            & \textbf{58.1} & 57.5 & -1.0 \\
HellaSwag (len\_norm)       & 30.5 & \textbf{31.3} & \textbf{+2.6}  \\
\midrule
\multicolumn{4}{l}{\textit{MMLU (5-shot MC, avg)}} \\
\midrule
STEM (acc)                         & 21.2 & \textbf{25.9} & \textbf{+22.2} \\
Humanities (acc)      
& 23.6 & \textbf{26.4} & \textbf{+11.9} \\
Social Sciences (acc)              & 24.6 & \textbf{25.8} & \textbf{+4.9}  \\
% Other                       & \textbf{26.2} & 21.7 &  \\
\bottomrule
\end{tabular}
\end{table}
At the final checkpoint, \texttt{Hi-MoE} improves most tasks and all reported MMLU slices, but a few \texttt{len\_norm} tasks remain lower. Because late-checkpoint \texttt{len\_norm} scores can be noisy, we also compare best-checkpoint values on CommonsenseQA and ARC-Challenge, the two most notable final-checkpoint regressions. Table~\ref{tab:olmoe7b_best} shows that CommonsenseQA turns positive under this protocol and the ARC-Challenge gap shrinks substantially, suggesting that the original regressions are largely a checkpoint-selection artifact rather than evidence that hierarchical routing hurts reasoning.

\begin{table}[!htbp]
\centering
\caption{Best-checkpoint comparison on tasks with final-checkpoint regressions (len\_norm metric).}
\label{tab:olmoe7b_best}
\setlength{\tabcolsep}{6pt}
\begin{tabular}{lcc}
\toprule
\textbf{Task} & \textbf{OLMoE-7B best} & \textbf{\texttt{Hi-MoE}-7B best} \\
\midrule
CommonsenseQA  & 27.35 & \textbf{27.44} \\
ARC-Challenge & \textbf{22.74} & 22.44 \\
\bottomrule
\end{tabular}
\end{table}

\paragraph{Load balance.}
In addition to the strong quality gains, an equally important question is routing balance. Indeed, uneven expert utilization is often the main practical bottleneck in MoE training and directly impacts resource efficiency. Table~\ref{tab:olmoe7b_summary} shows a large and operationally meaningful gap: following the standard OLMoE training protocol, \texttt{Hi-MoE-7B} achieves markedly more uniform expert utilization, improving load balance by about $40\%$ relative to OLMoE-7B. This directly translates into better GPU utilization at scale.

\begin{table}[!htbp]
\centering
\caption{Expert balance comparison between OLMoE-7B and \texttt{Hi-MoE}-7B. Lower variation (CV) indicates better balance.}
\label{tab:olmoe7b_summary}
\setlength{\tabcolsep}{8pt}
\begin{tabular}{lcc}
\toprule
\textbf{Metric} & \textbf{OLMoE-7B} & \textbf{\texttt{Hi-MoE}-7B} \\
\midrule
Expert Balance (CV) & 0.257 & \textbf{0.153} \\
\bottomrule
\end{tabular}
\end{table}

\paragraph{Diversity diagnostics.}
Load balance alone does not guarantee non-redundant experts. Appendix Table~\ref{tab:diversity_metrics} therefore reports two complementary diversity diagnostics for OLMoE-7B and \texttt{Hi-MoE}-7B. Parameter-space cosine is near zero for both models and is not discriminative in this regime, whereas representation-level expert similarity is lower for \texttt{Hi-MoE}, indicating better functional diversity.

\section*{Discussion \& Future Work}
In large MoE models, performance is tightly coupled to how routing allocates capacity. \texttt{Hi-MoE} shows that hierarchical coordination (intra- and inter-group) can turn expert scaling into reliable quality gains while keeping utilization well-behaved.

Future work can extend hierarchical routing to multimodal regimes and enable more structured specialization without sacrificing stability. Moreover, the hierarchy provides an interesting direction for mechanistic interpretability: groups can serve as meaningful units for analyzing emergent skills and steering inference by constraining routing to targeted expert subsets. 

\section*{Acknowledgements}
The authors would like to thank Michael Diskin for the valuable discussions throughout the preparation of this manuscript.

% ЗДЕСЬ ОЧЕНЬ СТРАННЫЕ РЕЗУЛЬТАТЫ, БУКВАЛЬНО НАОБОРОТ
% \begin{table*}[t]
% \centering
% \setlength{\tabcolsep}{4pt}
% \begin{tabular}{lcccccccccc}
% \toprule
% \textbf{Method} & \textbf{Test} & \textbf{Test} & \textbf{ARC-E} & \textbf{ARC-E} & \textbf{ARC-C} & \textbf{ARC-C} & \textbf{MMLU} & \textbf{MMLU} & \textbf{AVG} & \textbf{AVG} \\
% & \textbf{PPL} & \textbf{CV} & \textbf{Acc} & \textbf{CV} & \textbf{Acc} & \textbf{CV} & \textbf{Acc} & \textbf{CV} & \textbf{Acc} & \textbf{CV} \\
% \midrule
% Vanilla & $2.985_{\pm 0.024}$ & $0.31_{\pm 0.016}$ & 0.3211 & \textbf{0.2662} & 0.2457 & \textbf{0.2696} & 0.2387 & \textbf{0.2598} & 0.2685 & \textbf{0.2652} \\
% Loss-free & $2.979_{\pm 0.027}$ & $0.37_{\pm 0.017}$ & 0.3255 & 0.3845 & 0.2490 & 0.3698 & 0.2430 & 0.2986 & 0.2725 & 0.3110 \\
% MoGE & $2.977_{\pm 0.025}$ & $0.25_{\pm 0.018}$ & 0.3119 & 0.4036 & \textbf{0.2705} & 0.4032 & 0.2340 & 0.3097 & 0.2721 & 0.3722 \\
% \textbf{\texttt{Hi-MoE}} & $\textbf{2.967}_{\pm 0.019}$ & $\textbf{0.23}_{\pm 0.013}$ & \textbf{0.3367} & 0.3670 & 0.2509 & 0.3646 & \textbf{0.2490} & 0.3274 & \textbf{0.2789} & 0.3530 \\
% \bottomrule
% \end{tabular}
% \caption{Comparison of language modeling performance across nanoGPT architectures. Test metrics are reported as mean$_{\pm \text{std}}$ over 3 runs with different random seeds. Downstream benchmarks (ARC-E, ARC-C, MMLU) are evaluated once per trained model. Best values in each metric are highlighted in bold.}
% \label{tab:combined_results}
% \end{table*}

% \begin{acks}
% To Robert, for the bagels and explaining CMYK and color spaces.
% \end{acks}

\bibliographystyle{ACM-Reference-Format}
\bibliography{refs}

\appendix
% \newpage
\section{Experimental Reproducibility Checklist}
\label{app:repro}
\subsection{\texttt{Hi-MoE} fitting hyperparameters}
To ensure reproducibility and simplify adoption, we summarize the main architectural and optimization hyperparameters used to fit \texttt{Hi-MoE} across our experimental setups. Table~\ref{tab:hyperparameters} reports the configurations alongside model-specific settings.
\begin{table}[!htbp]
\centering
\caption{\texttt{Hi-MoE} hyperparameters for different setups.}
\label{tab:hyperparameters}
\small
\setlength{\tabcolsep}{3pt}
\begin{tabular}{llll}
\toprule
\textbf{Hyperparameter} & \textbf{nanoGPT} & \textbf{Swin-MoE} & \textbf{\texttt{OLMoE}-7B} \\
\midrule
Hidden dimension ($d$) & 1,024 & 96 & 2,048 \\
Attention heads & 16 & [3, 6, 12, 24] & 16 \\
Layers & 24 & [2, 2, 18, 2] & 16 \\
Context / image size & 1,024 tokens & $192 \times 192$ & 4,096 tokens \\
\midrule
Total experts ($N$) & 8 & 64 & 64 \\
Expert groups ($M$) & 4 & 4 & 4 \\
Top-$k$ per group & 1 & 1 & 2 \\
Active experts per token & 4 & 4 & 8 \\
Expert FFN dimension & $4 \times 1{,}024 = 4{,}096$ & $4 \times d_{\text{stage}}$ & 1,024 \\
$\alpha$ ($\mathcal{L}_{\text{load}}$ coefficient) & $0.01^\dagger$ & $0.01^\dagger$ & $0.0001^\dagger$ \\
\midrule
$T$ & $1.0^*$ & $1.0^*$ & 1.0 \\
$\lambda_{\text{intra}}$ & $0.1^*$ & $0.1^*$ & 0.1 \\
$\lambda_{\text{inter}}$ & $0.05^*$ & $0.05^*$ & 0.05 \\
$\tau$ & $0.01^*$ & $0.01^*$ & 0.01 \\
$\beta$ & $0.9^*$ & $0.9^*$ & 0.9 \\
$\beta_1$ (AdamW) & $0.9^\dagger$ & $0.9^\dagger$ & $0.9^\dagger$ \\
$\beta_2$ (AdamW) & $0.95^\dagger$ & $0.999^\dagger$ & $0.95^\dagger$ \\
\bottomrule
\end{tabular}

\begin{flushleft}
\scriptsize $^*$Tuned over: $T \in \{0.5, 0.75, 1.0, 1.25, 1.5\}$, $\lambda \in \{10^{-6}, 10^{-5}, 10^{-4}, 10^{-3}, 10^{-2}, 0.05, 0.1\}$, $\tau \in \{1, 0.1, 0.01, 0.001, 0.0001\}$, $\beta \in \{0.1, 0.3, 0.5, 0.7, 0.8, 0.9, 0.95, 0.99, 0.999\}$. For \texttt{Hi-MoE}-7B, these parameters were not tuned due to model scale and computational constraints.

\scriptsize $^\dagger$Taken from original codebase configurations.
\end{flushleft}
\end{table}

\paragraph{GPU utilization.}
For system-level efficiency, one can adopt the same distributed training pipeline as Pangu MoE~\citep{moge}, combining tensor, expert, pipeline (including virtual pipeline), and context parallelism. While \citep{moge} also highlights engineering-driven throughput gains under such setups, our work intentionally focuses on the methodological side -- improving routing behavior via hierarchical optimization. Importantly, \texttt{Hi-MoE} is fully compatible with these parallelism pipelines and, by enforcing more uniform expert utilization, can translate into higher effective GPU utilization when deployed in the same system regime.

\subsection{Router profiling}
\label{app:repro:profiling}
Table~\ref{tab:router_profile} profiles nanoGPT-1B routing under identical conditions ($4\times$ H100, batch size 12). \texttt{Hi-MoE} increases per-layer router time because grouped Top-$K$ and bias correction add a small amount of local compute, but grouped dispatch reduces synchronization overhead enough to make the average forward pass faster than flat routing. Relative to MoGE, the remaining end-to-end step-time overhead is approximately $5\%$.

\begin{table}[!htbp]
\centering
\caption{nanoGPT-1B routing profile under identical conditions ($4\times$ H100, batch size 12).}
\label{tab:router_profile}
\setlength{\tabcolsep}{7pt}
\begin{tabular}{lcc}
\toprule
\textbf{Metric} & \textbf{Flat MoE} & \textbf{\texttt{Hi-MoE}} \\
\midrule
Per-layer router time (ms) & 1.3 & 1.6 \\
\texttt{cudaDeviceSynchronize} (relative) & $1.0\times$ & $0.33\times$ \\
Average forward pass (ms) & 48.0 & \textbf{42.3} \\
\bottomrule
\end{tabular}
\vspace{-4mm}
\end{table}

\subsection{Additional diversity diagnostics}
\label{app:repro:diversity}
To complement CV, we report both a parameter-space metric and a representation-level metric for the 7B models. Table~\ref{tab:diversity_metrics} shows that parameter cosine is nearly zero for both methods and therefore weakly informative, whereas mean expert similarity decreases under \texttt{Hi-MoE}, indicating improved functional diversity.

\begin{table}[!htbp]
\centering
\caption{Additional diversity diagnostics on OLMoE-7B and \texttt{Hi-MoE}-7B.}
\label{tab:diversity_metrics}
\setlength{\tabcolsep}{8pt}
\begin{tabular}{lcc}
\toprule
\textbf{Metric} & \textbf{OLMoE-7B} & \textbf{\texttt{Hi-MoE}-7B} \\
\midrule
Parameter cosine & $0.012 \pm 0.004$ & $0.013 \pm 0.002$ \\
Mean expert similarity & $0.609 \pm 0.158$ & $\mathbf{0.584 \pm 0.105}$ \\
\bottomrule
\end{tabular}
\vspace{-4mm}
\end{table}

%%%%%%%%%%%%%%%%%%%%%%%%%%%%%%%%%%%%%%%%%%%%%%%%%%%%%%%%%%%%%%%%%%%%%%%%%%%%%%
% Theory
%%%%%%%%%%%%%%%%%%%%%%%%%%%%%%%%%%%%%%%%%%%%%%%%%%%%%%%%%%%%%%%%%%%%%%%%%%%%%%

% \subsection*{Notation}
% The table below summarizes the notation used throughout the paper.

\begin{table}[!htbp]
\caption{Notation.}\label{tab:notation}
\centering
\small
\setlength{\tabcolsep}{6pt}
\renewcommand{\arraystretch}{1.15}
\begin{tabular}{p{0.22\columnwidth}p{0.72\columnwidth}}
\toprule
\textbf{Symbol} & \textbf{Meaning}\\
\midrule
$C(\mathbf{x})$ & total (soft) cross-expert gradient coupling on token $\mathbf{x}$ \\
$d$ & token/hidden dimension (so $\mathbf{x}\in\mathbb{R}^d$) \\
$E\sim\pi(\mathbf{X})$ & stochastic expert identity under router $\pi$ \\
$f_i:\mathbb{R}^d\to\mathbb{R}^d$ & expert network $i$ \\
$g:\mathbb{R}^d\to\mathbb{R}^N$ & router / gating network \\
$g_i(\mathbf{x})$ & expert-local gradient on token $\mathbf{x}$ (w.r.t. $\theta_i$) \\
$G$ & uniform bound with $\|g_i(\mathbf{x})\|_2\le G$ \\
$H_2(E)$ & collision entropy of $E$ \\
% $H_2(E\mid \mathbf{X})$ & collision conditional entropy of expert given token \\
$I_2(\mathbf{X};E)$ & collision mutual information between token and expert \\
$K$ & Top-$K$ sparsity (number of selected experts) \\
$K_m$ & Top-$K$ within group $m$ \\
$L$ & (soft) group load vector for a batch \\
$L_g$ & load assigned to group $g$ \\
$M$ & number of expert groups \\
$N$ & number of experts \\
$p(e\mid\mathbf{x})$ & router distribution over experts (generic notation) \\
$p(g\mid\mathbf{x})$ & induced group mass for token $\mathbf{x}$ \\
$p(i)$ & marginal usage probability of expert $i$ \\
$r(\mathbf{x})\in\Delta^{M-1}$ & normalized group assignment induced by $\widetilde{\bm{\pi}}(\mathbf{x})$ \\
$\bar r$ & mean group-load distribution $\mathbb{E}[r(\mathbf{X})]$ \\
$S_{\max}$ & maximum group size $\max_{g\in[M]}|\mathcal{G}_g|$ \\
$T$ & softmax temperature in bias-corrected router \\
$\mathcal{B}=\{\mathbf{x}_b\}_{b=1}^B$ & batch of $B$ token representations \\
$\mathcal{C}_{\text{ov}}$ & proxy cost for routing overlap \\
$\mathcal{C}_{\text{sys}}$ & proxy cost for group/GPU imbalance \\
$\Delta^{n-1}$ & probability simplex in $\mathbb{R}^n$ \\
$\mathbb{I}\{\cdot\}$ & indicator function \\
$\mathcal{G}_g$ & index set of experts in group $g$ \\
$\mathcal{L}$ & total training objective\\
$\mathcal{L}_{\text{load}}$ & standard auxiliary load-balancing loss \\
$\mathcal{L}_{\text{task}}$ & task loss \\
$\mathcal{R}_{\text{inter}}$ & inter-group regularizer ($\propto\|\widetilde{\bm{\pi}}\|_2^2$) \\
$\mathcal{R}_{\text{intra}}$ & intra-group regularizer ($\propto-\|\bm{\pi}\|_2^2$) \\
$\bm{\pi}(\mathbf{x})\in\Delta^{N-1}$ & router probabilities over experts for token $\mathbf{x}$ \\
$\pi_i(\mathbf{x})$ & $i$-th component of $\bm{\pi}(\mathbf{x})$ \\
$\widetilde{\bm{\pi}}(\mathbf{x})$ & post-Top-$K$ routing weights  \\
$\mathbf{x},\mathbf{X}$ & token representation (deterministic / random variable) \\
$\overline{\mathbf{g}}$ & EMA of router logits \\
$\theta_i\in\mathbb{R}^P$ & parameters of expert $i$ \\
$P$ & number of parameters per expert \\
$\lambda_{\text{inter}}$ & weight of $\mathcal{R}_{\text{inter}}$ \\
$\lambda_{\text{intra}}$ & weight of $\mathcal{R}_{\text{intra}}$ \\
$\mu$ & mean of $\{L_g\}_{g=1}^M$ \\
$\mu_{\text{load}}$ & mean of token counts per expert \\
$\sigma^2$ & variance of $\{L_g\}_{g=1}^M$ \\
$\sigma_{\text{load}}$ & std of token counts per expert \\
$\tau$ & bias-correction strength in router logits \\
$\widehat L\in\Delta^{M-1}$ & normalized load distribution $\widehat L_g=L_g/\sum_h L_h$ \\
$\mathrm{CV}(L)$ & coefficient of variation of $L$: $\sigma/\mu$ \\
$\varepsilon_{\text{ov}}$ & overlap constraint level \\
$\varepsilon_{\text{sys}}$ & system-imbalance constraint level \\
\bottomrule
\end{tabular}
\end{table}
\normalsize

% Table \ref{tab:notation} summarizes the notation used throughout the paper.
% \newpage

\subsection{Hardware}
\label{app:repro:hardware}

Experiments were run on nodes with $8\times$ NVIDIA H200 GPUs and $8\times$ NVIDIA H100 GPUs. As an end-to-end reference point, pre-training Swin Transformer-1B takes 24 GPU-hours, nanoGPT-1B takes 372 GPU-hours, and \texttt{Hi-MoE}-7B takes approximately 632 GPU-hours.

\section{Supplementary analysis}
\label{sec:theory}
In this appendix, we provide proofs for all theoretical statements from Section~\ref{sec:main:theory}.

Recall that $M$ denotes the number of expert groups (e.g., GPU-aligned shards). Consider any nonnegative group load vector
$L=(L_1,\dots,L_M)\in\mathbb{R}_+^M$ computed on a batch (hard counts or soft mass, both are admissible here).
Let $\mu := \frac{1}{M}\sum_{g=1}^M L_g$ and $\sigma^2 := \frac{1}{M}\sum_{g=1}^M (L_g-\mu)^2$, and define
$\mathrm{CV}(L):=\sigma/\mu$ as in \eqref{eq:load}.
Define the \emph{normalized} load distribution $\widehat L \in \Delta^{M-1}$ by
$\widehat L_g := L_g / \sum_{h=1}^M L_h$.

\begin{proposition}[CV--$\ell_2$ equivalence]
\label{thm:cv-l2}
For any $L\in\mathbb{R}_+^M$ with $\sum_g L_g>0$,
\[
\mathrm{CV}(L)^2 \;=\; M\bigl\|\widehat L\bigr\|_2^2 \;-\; 1.
\]
In particular, minimizing $\mathrm{CV}(L)$ is equivalent to minimizing $\|\widehat L\|_2^2$.
\end{proposition}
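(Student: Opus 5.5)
The plan is a direct algebraic computation: write both sides in terms of the total mass $S:=\sum_{h=1}^M L_h>0$ and the second moment $\sum_g L_g^2$, then compare. No earlier result is needed.

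\textbf{Step 1: express $\mu$ and $\widehat L$ through $S$.} We have $\mu = S/M$, which is strictly positive, so $\mathrm{CV}(L)$ is well defined. Also $\widehat L_g = L_g/S = L_g/(M\mu)$.

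\textbf{Step 2: expand the variance.} Using the usual identity,
\[
\sigma^2=\frac1M\sum_{g=1}^M L_g^2-\mu^2 .
\]
Dividing by $\mu^2$ gives
\[
\mathrm{CV}(L)^2=\frac{1}{M\mu^2}\sum_{g=1}^M L_g^2-1 .
\]

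\textbf{Step 3: compare with $\|\widehat L\|_2^2$.} Substituting $L_g = M\mu\,\widehat L_g$ yields
\[
\frac{1}{M\mu^2}\sum_{g=1}^M L_g^2=\frac{M^2\mu^2}{M\mu^2}\|\widehat L\|_2^2=M\|\widehat L\|_2^2 .
\]
Combining this with Step 2 gives the identity $\mathrm{CV}(L)^2 = M\|\widehat L\|_2^2 - 1$.

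\textbf{Step 4: the equivalence claim.} The map $t\mapsto\sqrt{Mt-1}$ is strictly increasing on its domain. That domain is nonempty and contains every admissible value, since Cauchy--Schwarz gives $\|\widehat L\|_2^2\ge 1/M$, so $\mathrm{CV}^2\ge 0$ consistently. It follows that minimizing $\mathrm{CV}(L)$ over any family of load vectors is the same as minimizing $\|\widehat L\|_2^2$, with identical minimizers.

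\textbf{Main obstacle.} There is no real difficulty. The only points needing care are:
\begin{itemize}
\item using the population variance with normalization $1/M$, exactly as defined above, since the identity fails with $1/(M-1)$;
\item noting that $S>0$ guarantees $\mu>0$, so that division by $\mu$ is legitimate.
\end{itemize}
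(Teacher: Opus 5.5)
Your proof is correct and is essentially the paper's argument: a direct algebraic expansion of $\sigma^2/\mu^2$ using $\mu = S/M$ and $\widehat L_g = L_g/S$. The only differences are cosmetic. The paper first writes $L_g-\mu = S(\widehat L_g - 1/M)$ and expands the centered squares, whereas you use $\sigma^2 = \frac{1}{M}\sum_g L_g^2-\mu^2$ and substitute afterwards; your Step 4 also spells out the monotonicity behind the ``in particular'' clause, which the paper leaves implicit.
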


\begin{proof}
Let $S:=\sum_{g=1}^M L_g$, so $\mu=S/M$ and $L_g-\mu=S(\widehat L_g-1/M)$.
Then
\[
\sigma^2
= \frac{1}{M}\sum_{g=1}^M S^2\Bigl(\widehat L_g-\frac{1}{M}\Bigr)^2
= \frac{S^2}{M}\sum_{g=1}^M \Bigl(\widehat L_g^2 - \frac{2}{M}\widehat L_g + \frac{1}{M^2}\Bigr).
\]
Since $\sum_g \widehat L_g = 1$, we obtain
$\sum_g (\widehat L_g-\frac{1}{M})^2 = \sum_g \widehat L_g^2 - \frac{1}{M} = \|\widehat L\|_2^2 - \frac{1}{M}$.
\begin{eqnarray*}
\sigma^2
&=& \frac{S^2}{M}\Bigl(\|\widehat L\|_2^2 - \frac{1}{M}\Bigr),\\
\mathrm{CV}(L)^2
&=& M\|\widehat L\|_2^2 - 1.
\end{eqnarray*}
\end{proof}

Proposition~\ref{thm:cv-l2} shows that \emph{balance is exactly an $\ell_2$ concentration objective}.
Hence any surrogate that provably controls an $\ell_2$-concentration of group loads is directly tied to CV.
%-------------------------------------------------------------------------------
\subsection{Inter-group regularizer: additional proofs}
\label{sec:theory-inter}

We now connect the inter-group regularizer $\mathcal{R}_{\text{inter}}$ in \eqref{eq:r_inter}
to group imbalance.
For a token representation $\mathbf{x}$, let $\widetilde{\bm{\pi}}(\mathbf{x})\in\mathbb{R}_+^N$ be the
post-selection (sparsified) weights defined in \eqref{eq:topk_pi}.
Define the induced \emph{group-mass} vector
$r(\mathbf{x})\in\mathbb{R}_+^M$ by
\[
r_g(\mathbf{x}) \;:=\; \sum_{e\in\mathcal{G}_g}\widetilde{\pi}_e(\mathbf{x}),\qquad g=1,\dots,M.
\]
When $\widetilde{\bm{\pi}}(\mathbf{x})$ is normalized to sum to one on the selected support (the common Top-$K$-softmax variant),
$r(\mathbf{x})$ is a distribution over groups and $\sum_g r_g(\mathbf{x})=1$.
(If one uses the softmax-then-mask variant in \eqref{eq:topk_pi}, one may equivalently work with the normalized vector
$r(\mathbf{x})/\sum_h r_h(\mathbf{x})$; the inequalities below then hold verbatim for the normalized version.) $S_{\max}:=\max_{g}|\mathcal{G}_g|$ is the largest group size.

\begin{lemma}[Group-sum $\ell_2$ bound (Cauchy--Schwarz)]
\label{lem:group-sum-bound}
For any nonnegative $w\in\mathbb{R}_+^N$ and group partition $\{\mathcal{G}_g\}_{g=1}^M$, let
$m_g := \sum_{e\in\mathcal{G}_g} w_e$. Then
\[
\|m\|_2^2 \;\le\; S_{\max}\,\|w\|_2^2.
\]
\end{lemma}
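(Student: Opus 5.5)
The plan is to prove the bound one group at a time and then add the groups up. Fix a group $g$ and consider the entries $(w_e)_{e\in\mathcal{G}_g}$. Applying Cauchy--Schwarz to the inner product of this subvector with the all-ones vector on $\mathcal{G}_g$ gives
\[
m_g^2 \;=\; \Bigl(\sum_{e\in\mathcal{G}_g} 1\cdot w_e\Bigr)^2 \;\le\; |\mathcal{G}_g|\sum_{e\in\mathcal{G}_g} w_e^2 \;\le\; S_{\max}\sum_{e\in\mathcal{G}_g} w_e^2 .
\]
The second inequality uses only the definition $S_{\max}=\max_g|\mathcal{G}_g|$. An alternative derivation is Jensen's inequality for the square function over the uniform distribution on $\mathcal{G}_g$. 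Empty groups are harmless: in that case $m_g=0$ and both sides vanish.

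Next, I sum the per-group inequality over $g=1,\dots,M$. The groups $\mathcal{G}_1,\dots,\mathcal{G}_M$ form a partition of $[N]$, so every index $e$ appears in exactly one inner sum. The double sum $\sum_g\sum_{e\in\mathcal{G}_g}w_e^2$ therefore collapses to $\sum_{e=1}^N w_e^2=\|w\|_2^2$. This yields $\|m\|_2^2=\sum_g m_g^2\le S_{\max}\|w\|_2^2$, as claimed.

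Nonnegativity of $w$ is not actually needed for the inequality itself. I will keep the hypothesis only because the lemma is applied to $w=\widetilde{\bm{\pi}}(\mathbf{x})$. There, nonnegativity ensures that $m=r(\mathbf{x})$ is a genuine group-mass vector, which is what Theorem~\ref{thm:main:inter-controls-cv} uses.

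There is no real obstacle in this argument. The only point needing care is that the partition property is what makes the inner sums recombine exactly into $\|w\|_2^2$, with no index counted twice or omitted. I will also note that the constant $S_{\max}$ is tight: equality holds when $w$ is constant on a single largest group and zero elsewhere.
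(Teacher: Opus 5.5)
Your proof is correct and is essentially the paper's argument: apply Cauchy--Schwarz within each group, replace $|\mathcal{G}_g|$ by $S_{\max}$, and sum over the partition so the inner sums recombine into $\|w\|_2^2$. Your side remarks are also correct: nonnegativity is not needed, and equality holds when $w$ is constant on a largest group and zero elsewhere.
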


\begin{proof}
Fix a group $g$. By Cauchy--Schwarz,
\[
m_g^2 = \Bigl(\sum_{e\in\mathcal{G}_g} w_e\Bigr)^2
\le |\mathcal{G}_g|\sum_{e\in\mathcal{G}_g} w_e^2
\le S_{\max}\sum_{e\in\mathcal{G}_g} w_e^2.
\]
Summing over $g$ yields
$\sum_g m_g^2 \le S_{\max}\sum_{g}\sum_{e\in\mathcal{G}_g} w_e^2 = S_{\max}\|w\|_2^2$.
\end{proof}

\begin{theorem}[Theorem \ref{thm:main:inter-controls-cv}]
\label{thm:inter-controls-cv}
Assume $r(\mathbf{x})$ is normalized so that $\sum_{g=1}^M r_g(\mathbf{x})=1$ for all tokens.
Let $\mathbf{X}$ be a random token representation.
Define the \emph{mean group load distribution} $\bar r := \mathbb{E}[r(\mathbf{X})]\in\Delta^{M-1}$.
Then
\[
\|\bar r\|_2^2
\;\le\;
\mathbb{E}\bigl[\|r(\mathbf{X})\|_2^2\bigr]
\;\le\;
S_{\max}\,\mathbb{E}\bigl[\|\widetilde{\bm{\pi}}(\mathbf{X})\|_2^2\bigr].
\]
Consequently, for any batch whose (soft) group-load vector is proportional to $\bar r$, its group-level coefficient of
variation satisfies
\[
\mathrm{CV}_{\text{group}}^2 \;+\; 1
\;=\;
M\|\bar r\|_2^2
\;\le\;
M S_{\max}\,\mathbb{E}\bigl[\|\widetilde{\bm{\pi}}(\mathbf{X})\|_2^2\bigr].
\]
\end{theorem}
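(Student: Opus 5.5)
The plan is to prove the chain of inequalities one link at a time, and then obtain the CV statement by substituting into Proposition~\ref{thm:cv-l2}.

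\emph{First inequality (Jensen).} The map $v\mapsto\|v\|_2^2$ is convex on $\mathbb{R}^M$. Applying Jensen's inequality to the random vector $r(\mathbf{X})$ gives $\|\mathbb{E}[r(\mathbf{X})]\|_2^2\le\mathbb{E}\|r(\mathbf{X})\|_2^2$. Equivalently, I would write
\[
\mathbb{E}\|r(\mathbf{X})\|_2^2-\|\bar r\|_2^2=\mathbb{E}\|r(\mathbf{X})-\bar r\|_2^2\ge 0,
\]
which is a variance decomposition and avoids any regularity issues beyond integrability. Integrability is automatic here, since $r(\mathbf{X})\in\Delta^{M-1}$ is bounded.

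\emph{Second inequality (pointwise Cauchy--Schwarz).} I will prove the bound pointwise for each token $\mathbf{x}$ and then take expectations. Apply Lemma~\ref{lem:group-sum-bound} with $w=\widetilde{\bm{\pi}}(\mathbf{x})$, which is nonnegative. Its group sums are $m_g=\sum_{e\in\mathcal{G}_g}\widetilde\pi_e(\mathbf{x})$, so the lemma gives $\|m\|_2^2\le S_{\max}\|\widetilde{\bm{\pi}}(\mathbf{x})\|_2^2$.

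The step needing care is identifying $m$ with $r(\mathbf{x})$. This identification is exact when $\widetilde{\bm{\pi}}(\mathbf{x})$ sums to one on its selected support, which is the Top-$K$-then-renormalize convention the statement's normalization assumption refers to. In the softmax-then-mask variant, $Z:=\sum_e\widetilde\pi_e(\mathbf{x})\le 1$, and then $\|r\|_2^2=\|m\|_2^2/Z^2$. This is \emph{not} bounded by $S_{\max}\|\widetilde{\bm{\pi}}\|_2^2$ in general. I therefore plan to apply the lemma to the renormalized weights $\widetilde{\bm{\pi}}/Z$ and interpret $\widetilde{\bm{\pi}}$ in the theorem as this normalized vector, as the paper's parenthetical remark indicates. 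Pinning down this convention is the main obstacle I expect; once it is fixed, the argument is a one-line application of the lemma. Finally, taking expectations preserves the pointwise inequality.

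\emph{Consequence.} Let $L$ be a batch group-load vector proportional to $\bar r$. Since $\bar r\in\Delta^{M-1}$, its normalization is $\widehat L=\bar r$. Proposition~\ref{thm:cv-l2} then gives $\mathrm{CV}_{\text{group}}^2+1=M\|\bar r\|_2^2$. Multiplying the chain just established by $M$ yields the stated upper bound $M S_{\max}\,\mathbb{E}\|\widetilde{\bm{\pi}}(\mathbf{X})\|_2^2$.
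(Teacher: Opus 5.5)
Your proposal is correct and follows the paper's proof: Jensen's inequality for the first link, Lemma~\ref{lem:group-sum-bound} applied pointwise with $w=\widetilde{\bm{\pi}}(\mathbf{X})$ for the second, and Proposition~\ref{thm:cv-l2} for the CV identity. Your normalization caveat is also justified: the paper's proof silently identifies the group sums with $r(\mathbf{X})$, which holds only when $\sum_e\widetilde{\pi}_e(\mathbf{X})=1$ (the appendix hypothesis $\sum_g r_g=1$, with $r$ defined as raw group sums, forces exactly this), so for softmax-then-mask weights the bound must be read with $\widetilde{\bm{\pi}}/Z$, as you propose.
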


\begin{proof}
The first inequality is Jensen's inequality for the convex function $v\mapsto \|v\|_2^2$:
$\|\mathbb{E}[r(\mathbf{X})]\|_2^2 \le \mathbb{E}[\|r(\mathbf{X})\|_2^2]$.
The second inequality is Lemma~\ref{lem:group-sum-bound} applied pointwise with
$w=\widetilde{\bm{\pi}}(\mathbf{X})$ and $m=r(\mathbf{X})$.
Finally, by Proposition \ref{thm:cv-l2} and the fact that $\bar r$ is a distribution over $M$ groups,
$\mathrm{CV}_{\text{group}}^2 = M\|\bar r\|_2^2 - 1$.
\end{proof}
%-------------------------------------------------------------------------------
\subsection{Intra-group anti-regularizer: additional proofs}
\label{sec:theory-intra}

If balancing pressure drives routing toward token-independent (or nearly token-independent) assignments,
experts tend to be trained on essentially the same signal, yielding redundant solutions.
% The following symmetry statement makes this precise in a minimal setting.

% \begin{proposition}[Uniform routing preserves expert symmetry]
% \label{prop:symmetry}
% Consider an MoE layer with $N$ experts $\{f_i(\cdot;\theta_i)\}_{i=1}^N$ of identical architecture.
% Assume routing is uniform for every token: $\pi_i(\mathbf{x})\equiv \frac{1}{N}$ for all $i,\mathbf{x}$.
% Assume full-batch gradient descent on any differentiable loss $\mathcal{L}(\theta_1,\dots,\theta_N)$ that depends on experts
% only through the symmetric mixture output $\frac{1}{N}\sum_{i=1}^N f_i(\mathbf{x};\theta_i)$.
% If $\theta_1(0)=\cdots=\theta_N(0)$, then $\theta_1(t)=\cdots=\theta_N(t)$ for all iterations $t$.
% \end{proposition}

% \begin{proof}
% By symmetry of the mixture, for any iteration $t$ with $\theta_1(t)=\cdots=\theta_N(t)$, the partial derivatives
% $\nabla_{\theta_i}\mathcal{L}$ are identical across $i$ (each expert appears in the mixture with the same coefficient).
% Thus the gradient descent updates are identical, preserving $\theta_1(t+1)=\cdots=\theta_N(t+1)$.
% The claim follows by induction from the initialization.
% \end{proof}

In contrast, \texttt{Hi-MoE} adds the intra term
$\mathcal{R}_{\text{intra}}=-\lambda_{\text{intra}}\|\bm{\pi}(\mathbf{x})\|_2^2$ in \eqref{eq:r_intra}.
Note that on the probability simplex, increasing $\|\bm{\pi}\|_2^2$ makes routing \emph{more decisive}
(peaked), which is exactly what breaks symmetry and prevents experts from receiving identical mixtures.
Thus, for any probability vector $\pi\in\Delta^{N-1}$,
\begin{equation}\label{lem:overlap}
\sum_{i\neq j}\pi_i\pi_j \;=\; 1-\|\pi\|_2^2.
\end{equation}
Equivalently, if $E_1,E_2\stackrel{\text{i.i.d.}}{\sim}\pi$, then
$\mathbb{P}(E_1\neq E_2)=1-\|\pi\|_2^2$.

Equation~\eqref{lem:overlap} naturally gives us that maximizing $\|\bm{\pi}(\mathbf{x})\|_2^2$ is exactly
minimizing the soft co-activation overlap $\sum_{i\neq j}\pi_i(\mathbf{x})\pi_j(\mathbf{x})$.
Thus, $\mathcal{R}_{\text{intra}}$ is a principled anti-overlap regularizer:
it explicitly reduces how much two different experts share the same token in expectation.
This is conceptually aligned with the long line of work on encouraging diversity via anti-correlation in ensembles
(e.g., negative correlation learning) and with modern anti-regularization ideas that enlarge the space of hypotheses
sampled by an ensemble (e.g., DARE)~\citep{liu1999ensemble, antiregularization}.
%-------------------------------------------------------------------------------
% \subsection{Anti-overlap increases expert diversity by decoupling gradients}
% \label{sec:theory-gradients}

To connect overlap to what changes in training dynamics, we analyze gradient coupling.
Let each expert have parameters $\theta_i\in\mathbb{R}^P$ (same dimension $P$ for all experts).
For a token $\mathbf{x}$ and its associated loss contribution, denote the expert-local gradient by
$g_i(\mathbf{x}) := \nabla_{\theta_i}\ell_i(\mathbf{x};\theta_i)\in\mathbb{R}^P$.
The MoE weighting yields an effective update direction proportional to
$\pi_i(\mathbf{x})\,g_i(\mathbf{x})$.

\begin{theorem}[Theorem \ref{thm:main:grad-coupling}]
\label{thm:grad-coupling}
Assume $\|g_i(\mathbf{x})\|_2 \le G$ for all $i$ and all $\mathbf{x}$ (clipping scenario).
Define the total (soft) cross-expert coupling on a token by
\[
C(\mathbf{x})
\;:=\;
\sum_{i\neq j}\bigl|\langle \pi_i(\mathbf{x})g_i(\mathbf{x}),\,\pi_j(\mathbf{x})g_j(\mathbf{x})\rangle\bigr|.
\]
Then
\[
C(\mathbf{x})
\;\le\;
G^2\sum_{i\neq j}\pi_i(\mathbf{x})\pi_j(\mathbf{x})
\;=\;
G^2\bigl(1-\|\bm{\pi}(\mathbf{x})\|_2^2\bigr).
\]
Therefore, maximizing $\|\bm{\pi}(\mathbf{x})\|_2^2$ (i.e., applying $\mathcal{R}_{\text{intra}}$)
provably decreases an upper bound on how strongly different experts are trained on the \emph{same} token update.
\end{theorem}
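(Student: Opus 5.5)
The bound is a termwise estimate followed by the overlap identity \eqref{lem:overlap}. I would fix a token $\mathbf{x}$ and suppress the argument. Each summand in $C(\mathbf{x})$ has nonnegative scalar weights $\pi_i,\pi_j\ge 0$, so bilinearity gives
\[
\bigl|\langle \pi_i g_i,\,\pi_j g_j\rangle\bigr| \;=\; \pi_i\pi_j\,\bigl|\langle g_i,g_j\rangle\bigr|.
\]

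Next I would bound the inner product by Cauchy--Schwarz and then apply the clipping assumption $\|g_i\|_2\le G$ for every $i$:
\[
|\langle g_i,g_j\rangle|\le \|g_i\|_2\|g_j\|_2\le G^2 .
\]
Summing over ordered pairs $i\neq j$ then yields $C(\mathbf{x})\le G^2\sum_{i\neq j}\pi_i\pi_j$. This is the claimed inequality.

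For the equality I would invoke \eqref{lem:overlap}, which uses $\bm{\pi}(\mathbf{x})\in\Delta^{N-1}$. Since $\sum_i\pi_i=1$,
\[
1=\Bigl(\sum_i\pi_i\Bigr)^2=\sum_i\pi_i^2+\sum_{i\neq j}\pi_i\pi_j,
\]
and therefore $\sum_{i\neq j}\pi_i\pi_j = 1-\|\bm{\pi}\|_2^2$.

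The concluding sentence of the statement needs only a brief remark. The bound $G^2(1-\|\bm{\pi}\|_2^2)$ is strictly decreasing in $\|\bm{\pi}\|_2^2$, so raising $\|\bm{\pi}\|_2^2$ (which is what minimizing $\mathcal{R}_{\text{intra}}$ does) lowers the upper bound on $C(\mathbf{x})$.

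I do not expect a real obstacle here. The one point to check is that the sum runs over ordered pairs consistently in both $C(\mathbf{x})$ and \eqref{lem:overlap}, so that no factor of $2$ is lost. The other is that $\bm{\pi}$ is the pre-selection softmax vector, so it genuinely sums to one.
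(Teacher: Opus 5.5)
Your proposal is correct and matches the paper's proof: a termwise Cauchy--Schwarz bound with the clipping assumption, summed over ordered pairs $i\neq j$, then the overlap identity \eqref{lem:overlap}. The only addition is that you derive that identity from $\bigl(\sum_i\pi_i\bigr)^2=1$, whereas the paper simply states it.
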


\begin{proof}
By Cauchy--Schwarz and the gradient bound,
\[
\bigl|\langle \pi_i g_i, \pi_j g_j\rangle\bigr|
\le \pi_i\pi_j\|g_i\|_2\|g_j\|_2
\le G^2\pi_i\pi_j.
\]
Summing over $i\neq j$ gives
$C(\mathbf{x})\le G^2\sum_{i\neq j}\pi_i\pi_j$.
Apply Equation~\eqref{lem:overlap} to obtain the final expression.
\end{proof}
%-------------------------------------------------------------------------------
% \subsection{Specialization under balance: maximizing $\|\pi\|_2^2$ increases routing information}
% \label{sec:theory-mi}

To formalize coverage of experts, we use a standard specialization proxy:
how informative the expert identity is about the input token.
Consider stochastic routing where $E\sim \pi(\mathbf{X})$ given a random token $\mathbf{X}$.
A quantity tightly connected to $\|\pi\|_2^2$ is the collision conditional entropy which decreases when routing becomes more decisive:
\[
H_2(E\mid \mathbf{X})
\;:=\;
-\log \mathbb{E}_{\mathbf{X}}\Bigl[\sum_{i=1}^N \pi_i(\mathbf{X})^2\Bigr]
\;=\;
-\log \mathbb{E}_{\mathbf{X}}\bigl[\|\pi(\mathbf{X})\|_2^2\bigr].
\]

\begin{theorem}[Theorem \ref{thm:main:renyi-mi}]
\label{thm:renyi-mi}
Let $p(i):=\mathbb{E}_{\mathbf{X}}[\pi_i(\mathbf{X})]$ be the marginal expert usage probability.
Define the collision mutual information
\[
I_2(\mathbf{X};E)
\;:=\;
H_2(E) - H_2(E\mid \mathbf{X}),
\qquad
H_2(E):=-\log\sum_{i=1}^N p(i)^2.
\]
If expert usage is perfectly balanced, i.e.\ $p(i)=1/N$ for all $i$, then
\[
I_2(\mathbf{X};E)
\;=\;
\log\Bigl(N\,\mathbb{E}_{\mathbf{X}}[\|\pi(\mathbf{X})\|_2^2]\Bigr),
\]
which is strictly increasing in $\mathbb{E}[\|\pi(\mathbf{X})\|_2^2]$.
\end{theorem}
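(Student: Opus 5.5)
The plan is to compute $I_2(\mathbf{X};E)$ directly from the two definitions and then read off monotonicity; no earlier result is needed beyond the fact that $\pi(\mathbf{x})\in\Delta^{N-1}$.

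First, under the balance hypothesis $p(i)=1/N$ for all $i$, evaluate the marginal collision entropy:
\[
\sum_{i=1}^N p(i)^2 = N\cdot\frac{1}{N^2}=\frac{1}{N},
\]
so $H_2(E)=-\log(1/N)=\log N$.

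Second, substitute the definition $H_2(E\mid\mathbf{X})=-\log\mathbb{E}_{\mathbf{X}}[\|\pi(\mathbf{X})\|_2^2]$ into $I_2=H_2(E)-H_2(E\mid\mathbf{X})$. This gives
\[
I_2(\mathbf{X};E)=\log N+\log\mathbb{E}_{\mathbf{X}}[\|\pi(\mathbf{X})\|_2^2]=\log\bigl(N\,\mathbb{E}_{\mathbf{X}}[\|\pi(\mathbf{X})\|_2^2]\bigr).
\]
For the logarithm to be well defined, the argument must be positive. Since $\pi(\mathbf{x})$ lies on the simplex, Cauchy--Schwarz gives $\|\pi(\mathbf{x})\|_2^2\ge 1/N$, and hence $N\,\mathbb{E}\|\pi\|_2^2\ge 1$.

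Third, strict monotonicity follows because $t\mapsto\log(Nt)$ is strictly increasing on $t>0$.

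As a by-product, the same bound yields $I_2\ge 0$, with equality exactly when $\|\pi(\mathbf{x})\|_2^2=1/N$ almost surely, i.e.\ $\pi(\mathbf{x})\equiv\frac1N\mathbf{1}$. This anticipates Corollary~\ref{cor:balance-only-mi}.

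The main obstacle is not technical but interpretive. The statement treats $\mathbb{E}\|\pi\|_2^2$ as a free variable while the balance constraint $p(i)=1/N$ is held fixed. So "strictly increasing" should be read as a statement about the function of that single quantity across routers sharing the balanced marginals. This caveat should be noted explicitly. The only point requiring care is the positivity and lower bound on $\mathbb{E}\|\pi\|_2^2$, which the Cauchy--Schwarz step above handles.
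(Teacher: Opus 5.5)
Your proposal is correct and follows essentially the same route as the paper, which likewise writes $I_2(\mathbf{X};E)=\log\bigl(\mathbb{E}[\|\pi(\mathbf{X})\|_2^2]/\sum_i p(i)^2\bigr)$ and substitutes $\sum_i p(i)^2=1/N$. Your added Cauchy--Schwarz check that $\|\pi(\mathbf{x})\|_2^2\ge 1/N$ is a small but welcome detail the paper leaves implicit: it guarantees the logarithm is well defined and gives $I_2\ge 0$.
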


\begin{proof}
By definition,
\[
I_2(\mathbf{X};E)
= -\log\sum_i p(i)^2 + \log \mathbb{E}_{\mathbf{X}}\Bigl[\sum_i \pi_i(\mathbf{X})^2\Bigr]
= \log\!\Bigl(\frac{\mathbb{E}[\|\pi(\mathbf{X})\|_2^2]}{\sum_i p(i)^2}\Bigr).
\]
If $p(i)=1/N$, then $\sum_i p(i)^2 = N\cdot (1/N)^2 = 1/N$, hence
$I_2(\mathbf{X};E)=\log(N\,\mathbb{E}[\|\pi(\mathbf{X})\|_2^2])$.
\end{proof}
\end{document}